\newtheorem{thm}{Theorem}[section]
\newtheorem{prop}[thm]{Proposition}
\newtheorem{defi}[thm]{Definition}
\newtheorem*{remark}{Remark}
\newcommand{\R}{\mathbb{R}}
\newcommand{\E}{\mathbb{E}}
\newcommand{\paren}[1]{\left(#1\right)}
\newcommand{\sqbrac}[1]{\left[#1\right]}
\newcommand{\grad}{\nabla}
\renewcommand{\vec}{\mathbf}
\newcommand\blfootnote[1]{%
  \begingroup
  \renewcommand\thefootnote{}\footnote{#1}%
  \addtocounter{footnote}{-1}%
  \endgroup
}
\theoremstyle{plain}
\theoremstyle{definition}
\theoremstyle{remark}
\icmltitlerunning{Discrete Diffusion Modeling by Estimating the Ratios of the Data Distribution}
\begin{document}

\twocolumn[
\icmltitle{Discrete Diffusion Modeling by Estimating the Ratios of the Data Distribution}



\icmlsetsymbol{equal}{*}

\begin{icmlauthorlist}
\icmlauthor{Aaron Lou}{s}
\icmlauthor{Chenlin Meng}{s,p}
\icmlauthor{Stefano Ermon}{s}
\end{icmlauthorlist}

\icmlaffiliation{s}{Stanford University}
\icmlaffiliation{p}{Pika Labs}

\icmlcorrespondingauthor{Aaron Lou}{aaronlou@stanford.edu}

\icmlkeywords{Machine Learning, ICML}

\vskip 0.3in
]



\printAffiliationsAndNotice{}  

\begin{abstract}
    Despite their groundbreaking performance for many generative modeling tasks, diffusion models have fallen short on discrete data domains such as natural language. Crucially, standard diffusion models rely on the well-established theory of score matching, but efforts to generalize this to discrete structures have not yielded the same empirical gains. In this work, we bridge this gap by proposing score entropy, a novel loss that naturally extends score matching to discrete spaces, integrates seamlessly to build discrete diffusion models, and significantly boosts performance. Experimentally, we test our Score Entropy Discrete Diffusion models (SEDD) on standard language modeling tasks. For comparable model sizes, SEDD beats existing language diffusion paradigms (reducing perplexity by $25$-$75$\%) and is competitive with autoregressive models, in particular outperforming GPT-2. Furthermore, compared to autoregressive mdoels, SEDD generates faithful text without requiring distribution annealing techniques like temperature scaling (around $6$-$8\times$ better generative perplexity than un-annealed GPT-2), can trade compute and quality (similar quality with $32\times$ fewer network evaluations), and enables controllable infilling (matching nucleus sampling quality while enabling other strategies besides left to right prompting).

\end{abstract}

\section{Introduction}
Many recent advances in deep learning have centered around generative modeling. Here, a model learns how to generate novel samples from unstructured data. With the powerful capabilities of modern neural networks, these ``generative AI" systems have developed unparalleled capabilities, such as creating images given only text \citep{Ramesh2022HierarchicalTI} and answering complex questions \citep{Brown2020LanguageMA}.

The crucial part for any deep generative model is the probabilistic modeling technique. For discrete data such as natural language, autoregressive modeling \citep{yule1971method}--arguably the simplest modeling type since it derives from the probabilistic chain rule--has remained the only competitive method for decades. Although modern autoregressive transformers have produced stunning results \citep{Vaswani2017AttentionIA, Radford2019LanguageMA}, there are limits. For example, the sequential sampling of tokens is slow, hard to control, and often degrades without distribution annealing techniques like nucleus sampling \citep{holtzman2019curious}.

To alleviate these issues, researchers have sought alternative approaches to generating text data. In particular, inspired by their success in the image domain, many works have extended diffusion models \citep{SohlDickstein2015DeepUL, Ho2020DenoisingDP, Song2020ScoreBasedGM} to language domains \citep{Li2022DiffusionLMIC, Austin2021StructuredDD}. Yet, despite considerable effort, no such approach yet rivals autoregressive modeling, as they are not competitive on likelihoods, are slower to sample from, and do not generate comparable samples without resorting to heavy annealing and empirical alterations.

In our work, we challenge the longstanding dominance of autoregressive models by introducing Score Entropy Discrete Diffusion models (SEDD). SEDD parameterizes a reverse discrete diffusion process using the ratios of the data distribution. These are learned using score entropy, a novel loss that is analogous to score matching for standard diffusion models \citep{Hyvrinen2005EstimationON, Song2019GenerativeMB} and results in several empirical benefits\blfootnote{We open source our code at \href{https://github.com/louaaron/Score-Entropy-Discrete-Diffusion}{github.com/louaaron/Score-Entropy-Discrete-Diffusion}}:
\begin{enumerate}
    \item On core language modeling tasks, SEDD outperforms all existing language diffusion models \citep{Li2022DiffusionLMIC, Austin2021StructuredDD, Gulrajani2023LikelihoodBasedDL, He2022DiffusionBERTIG} by large margins and is competitive with autoregressive models of the same size (beating GPT-2 on its zero-shot perplexity tasks \citep{Radford2019LanguageMA}). 
    \item SEDD generates high quality unconditional samples and enables one to naturally trade off compute for quality. When measuring the generative perplexity (given by large models) of unconditional and un-annealed samples from similarly sized models, SEDD beats GPT-2 by $6$-$8\times$ and can match performance using $32\times$ fewer function evaluations.
    \item By directly parameterizing probability ratios, SEDD is highly controllable. In particular, one can prompt SEDD from arbitrary positions without specialized training. For both standard (left to right) and infilling, SEDD outperforms language diffusion models and is comparable with autoregressive models with nucleus sampling (as measured by MAUVE score \citep{pillutla2021mauve}).
\end{enumerate} 

    
    
\section{Preliminaries}

\subsection{Discrete Diffusion Processes}

We will be modeling probability distributions over a finite support $\mathcal{X} = \{1, \dots, N\}$. As the support is discrete, note that our probability distributions can be represented by probability mass vectors $p \in \R^N$ that are positive and sum to $1$. To define a discrete diffusion process, we evolve a family of distributions $p_t \in \R^N$ according to the a continuous time Markov process given by a linear ordinary differential equation \citep{Campbell2022ACT, anderson2012continuous}:
\begin{equation}\label{eqn:discretediffusion}
    \frac{dp_t}{dt} = Q_t p_t \quad p_0 \approx p_{\rm data} 
\end{equation}
Here, $Q_t$ are the diffusion matrices $\R^{N \times N}$ and have non-negative non-diagonal entries and columns which sum to zero (so that the rate $\frac{dp_t}{dt}$ sums to $0$, meaning $p_t$ does not gain or lose total mass). Generally, $Q_t$ are simple (e.g. a simple scalar factor $Q_t = \sigma(t) Q$) so $p_t$ approaches a limiting distribution $p_{\rm base}$ as $t \to \infty$. 

One can simulate this process by taking small $\Delta t$ Euler steps and randomly sampling the resulting transitions. In particular, the samples are defined by transition densities which come from the columns of $Q_t$:
\begin{equation}\label{eqn:discrete_euler_base}
    p(x_{t + \Delta t} = y | x_t = x) = \delta_{xy} + Q_t(y, x) \Delta t + O(\Delta t^2)
\end{equation}

Finally, this process has a well known reversal \citep{Kelly1980ReversibilityAS, Sun2022ScorebasedCD} given by another diffusion matrix $\overline{Q}_t$:
\begin{multline}\label{eqn:reverse_discrete}
    \frac{dp_{T - t}}{dt} = \overline{Q}_{T - t} p_{T - t} \quad \overline{Q}_t(y, x) = \frac{p_t(y)}{p_t(x)} Q_t(x, y)\\
    \overline{Q}_t(x, x) = -\sum_{y \neq x} \overline{Q}_t(y, x)
\end{multline}

This reverse process is analogous to the time reversal for typical diffusion processes on $\R^n$, with the ratios $\frac{p_t(y)}{p_t(x)}$ (which are collectively known as the concrete score \citep{Meng2022ConcreteSM}) generalizing the typical score function $\grad_x \log p_t$ \citep{Song2019GenerativeMB} \footnote{The gradient operator for discrete structures is (up to some scaling) defined for pairs $x \neq y$ by $\grad f(xy) := f(y) - f(x)$. The score function would generalize to the normalized gradients $\frac{\grad p(xy)}{p(x)} = \frac{p(y)}{p(x)} - 1$.}

\subsection{Discrete Diffusion Models}

The goal of a discrete diffusion model is to construct the aforementioned reverse process by learning the ratios $\frac{p_t(y)}{p_t(x)}$. Unlike the continuous diffusion case, which has settled around (up to minor scaling variations) the theoretical framework given by score matching \citep{Hyvrinen2005EstimationON}, there currently exist many competing methods for learning discrete diffusion models. In particular, these tend to produce mixed empirical results, which spurs the need for a reexamination.

\textbf{Mean Prediction.} Instead of directly parameterizing the ratios $\frac{p_t(y)}{p_t(x)}$, \citet{Austin2021StructuredDD, Campbell2022ACT} instead follow a strategy of \citet{Ho2020DenoisingDP} to learn the reverse density $p_{0 | t}$. This actually recovers the ratios $\frac{p_t(y)}{p_t(x)}$ in a roundabout way (as shown in our Theorem \ref{thm:mean_param_score}), but comes with several drawbacks. First, learning $p_{0 | t}$ is inherently harder since it is a density (as opposed to a general value). Furthermore, the objective breaks down in continuous time and must be approximated \citep{Campbell2022ACT}. As a result, this framework largely underperforms empirically.

\textbf{Ratio Matching.} Originally introduced in \citet{Hyvrinen2007SomeEO} and augmented in \citet{Sun2022ScorebasedCD}, ratio matching learns the marginal probabilities of each dimension with maximum likelihood training. However, the resulting setup departs from standard score matching and requires specialized and expensive network architectures \citep{Chen2019NeuralNW}. As such, this tends to perform worse than mean prediction.

\textbf{Concrete Score Matching.} \citet{Meng2022ConcreteSM} generalizes the standard Fisher divergence in score matching, learning $s_\theta(x, t) \approx \begin{bmatrix}\frac{p_t(y)}{p_t(x)} \end{bmatrix}_{y \neq x}$ with concrete score matching:
\begin{equation}
    \mathcal{L}_{\rm CSM} = \frac{1}{2} \E_{x \sim p_t} \sqbrac{\sum_{y \neq x} \paren{s_\theta(x_t, t)_y - \frac{p_t(y)}{p_t(x)}}^2}
\end{equation}
Unfortunately, the $\ell^2$ loss is incompatible with the fact that $\frac{p_t(y)}{p_t(x)}$ must be positive. In particular, this does not sufficiently penalize negative or zero values, leading to divergent behavior. Although theoretically promising, Concrete Score Matching struggles (as seen in Appendix \ref{app:additional}).

\section{Score Entropy Discrete Diffusion Models}

In this section, we introduce score entropy. Similar to concrete score matching, we learn the collected concrete score $s_\theta(x, t) \approx \begin{bmatrix}\frac{p_t(y)}{p_t(x)} \end{bmatrix}_{y \neq x}$ ($s_\theta: \mathcal{X} \times \R \to \R^{|\mathcal{X}}|$). We design the score entropy loss to incorporate the fact that these ratios are positive and evolve under a discrete diffusion.

\begin{defi}
    The \textbf{score entropy} $\mathcal{L}_{\rm SE}$ for a distribution $p$, weights $w_{xy} \ge 0$ and a score network $s_\theta(x)_y$ is
    
    \begin{equation}\label{eqn:score_entropy}
        \E_{x \sim p} \sqbrac{\sum_{y \neq x} w_{xy} \paren{s_\theta(x)_y - \frac{p(y)}{p(x)} \log s_\theta(x)_y + K\paren{\frac{p(y)}{p(x)}}}}
    \end{equation}
    where $K(a) = a (\log a - 1)$ is a normalizing constant function that ensures that $\mathcal{L}_{\rm SE} \ge 0$.
\end{defi}
\begin{remark}
    Instead of building off of Fisher divergences, score entropy builds off of the Bregman divergence $D_F\paren{s(x)_y, \frac{p(y)}{p(x)}}$ when $F = -\log$ is the convex function. As such, score entropy is non-negative, symmetric, and convex. It also generalizes standard cross entropy to general positive values (instead of simplex-valued probabilities), inspiring the name. The weights $w_{xy}$ are used primarily when combining score entropy with diffusion models.
\end{remark}
While this expression is more complex than the standard score matching variants, it satisfies several desiderata for a discrete diffusion training objective:

\subsection{Score Entropy Properties}


\underline{\textbf{First,}} score entropy is a suitable loss function that recovers the ground truth concrete score.

\begin{prop}[Consistency of Score Entropy]\label{prop:consist}
    Suppose $p$ is fully supported and $w_{xy} > 0$. As the number of samples and model capacity approaches $\infty$, the optimal $\theta^*$ that minimizes Equation \ref{eqn:score_entropy} satisfies $s_{\theta^*}(x)_y = \frac{p(y)}{p(x)}$ for all pairs $x, y$ Furthermore, $\mathcal{L}_{\rm SE}$ will be $0$ at $\theta^*$.
\end{prop}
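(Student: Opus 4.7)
The plan is to exploit the infinite model capacity assumption and decompose the loss into independent per-pair optimization problems, then show each per-pair objective is strictly convex with a unique minimum at the target ratio $p(y)/p(x)$.

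First, I would rewrite the loss as a double sum $\sum_x p(x) \sum_{y \neq x} w_{xy} \ell_{xy}(s_\theta(x)_y)$ where
\[
\ell_{xy}(s) = s - \frac{p(y)}{p(x)} \log s + K\!\left(\frac{p(y)}{p(x)}\right).
\]
Since $s_\theta$ has unbounded capacity, each scalar $s_\theta(x)_y$ can be chosen independently, so minimizing $\mathcal{L}_{\rm SE}$ reduces to minimizing each $\ell_{xy}$ separately for those pairs $(x,y)$ with positive weight $p(x) w_{xy} > 0$ (here the full-support assumption on $p$ and the hypothesis $w_{xy} > 0$ enter).

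Next I would analyze $\ell_{xy}$ as a function of $s > 0$ for fixed ratio $r = p(y)/p(x) \ge 0$. Differentiating gives $\ell_{xy}'(s) = 1 - r/s$ and $\ell_{xy}''(s) = r/s^2 \ge 0$, so $\ell_{xy}$ is convex, and when $r > 0$ the unique stationary point is $s = r$, which is therefore the global minimum. A short substitution shows $\ell_{xy}(r) = r - r \log r + r(\log r - 1) = 0$, which simultaneously identifies the optimizer and the optimal value. (When $r = 0$, $\ell_{xy}$ is minimized in the limit $s \to 0^+$ with value $0$, consistent with the claim; but under full support $r > 0$ always.)

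Combining these observations, the pointwise optimum is $s_{\theta^*}(x)_y = p(y)/p(x)$ for every pair $x \neq y$, and the loss evaluates to $0$ since every summand vanishes. There is no real obstacle in this proof: the only subtlety is justifying the pointwise reduction, which is immediate from the ``infinite model capacity'' assumption together with $p$ having full support (so no $x$ is excluded from the outer expectation) and $w_{xy} > 0$ (so no summand can be ignored in the inner sum).
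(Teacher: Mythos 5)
Your proof is correct. The appendix of this paper does not actually include a proof of this proposition, but your argument — writing the population loss as $\sum_x p(x)\sum_{y\neq x} w_{xy}\,\ell_{xy}(s_\theta(x)_y)$ with $\ell_{xy}(s) = s - r\log s + K(r)$, $r = p(y)/p(x)$, and using strict convexity of each $\ell_{xy}$ on $s>0$ to get the unique minimizer $s=r$ with minimum value $0$, where full support and $w_{xy}>0$ guarantee every pair contributes and infinite capacity permits the pointwise reduction — is exactly the standard argument one would give for this statement, so there is nothing to correct.
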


\underline{\textbf{Second,}} score entropy directly improves upon concrete score matching by rescaling problematic gradients. For the weights $w_{xy} = 1$, $\grad_{s_\theta(x)_y} \mathcal{L}_{\rm SE} = \frac{1}{s_\theta(x)_y} \grad_{s_\theta(x)_y} \mathcal{L}_{\rm CSM}$, so the gradient signals for each pair $(x, y)$ are scaled by a factor of $s_\theta(x)_y$ as a normalization component. As such, this forms a natural log-barrier which keeps our $s_\theta \ge 0$.

\underline{\textbf{Third,}} similar to concrete score matching, score entropy can be made computationally tractable by removing the unknown $\frac{p(y)}{p(x)}$ term. There are two alternative forms, the first of which is analogous to the implicit score matching loss \citep{Hyvrinen2005EstimationON}:
    
\begin{prop}[Implicit Score Entropy]\label{prop:ise}
    $\mathcal{L}_{\rm SE}$ is equal up to a constant independent of $\theta$ to the \textbf{implicit score entropy}
    \begin{equation}
        \mathcal{L}_{\rm ISE} = \E_{x \sim p} \sqbrac{\sum_{y \neq x}w_{xy}  s_\theta(x)_y - w_{yx} \log s_\theta(y)_x}
    \end{equation}
\end{prop}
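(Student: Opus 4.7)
The plan is to start from the definition of $\mathcal{L}_{\rm SE}$ and split the inner sum into three pieces according to the three summands inside the parentheses. The first piece, $w_{xy}\, s_\theta(x)_y$, already matches the first term of $\mathcal{L}_{\rm ISE}$ verbatim, so nothing has to be done there. The third piece, $w_{xy} K(p(y)/p(x))$, involves no $\theta$-dependence at all, so its contribution to the expectation is a constant $C$ independent of $\theta$ and may be discarded.

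All the work therefore lies in rewriting the middle piece, namely
\[
A := \E_{x \sim p}\left[\sum_{y \neq x} w_{xy}\, \frac{p(y)}{p(x)}\, \log s_\theta(x)_y\right],
\]
into the $-w_{yx}\log s_\theta(y)_x$ form that appears in $\mathcal{L}_{\rm ISE}$. The step I would carry out is to expand the expectation as a double sum over $x$ and $y$ so that the factor $p(x)$ from the expectation cancels the $1/p(x)$ coming from the ratio. This gives
\[
A = \sum_{x}\sum_{y \neq x} w_{xy}\, p(y)\, \log s_\theta(x)_y.
\]
Now I would relabel the summation indices, swapping the roles of $x$ and $y$; since the sum runs over all ordered pairs with $x \neq y$, this exchange is valid and yields
\[
A = \sum_{x}\sum_{y \neq x} w_{yx}\, p(x)\, \log s_\theta(y)_x = \E_{x \sim p}\left[\sum_{y \neq x} w_{yx}\, \log s_\theta(y)_x\right].
\]

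Combining the three pieces gives $\mathcal{L}_{\rm SE} = \mathcal{L}_{\rm ISE} + C$, with $C = \E_{x\sim p}[\sum_{y\neq x} w_{xy} K(p(y)/p(x))]$ independent of $\theta$, which is the claimed equality up to a $\theta$-free constant. The only subtle point is the index-swap trick in the middle piece: it succeeds precisely because the expectation produces a $p(x)$ that cancels the denominator of the concrete score, leaving a symmetric double sum in which $x$ and $y$ are interchangeable. I do not anticipate any real obstacle beyond making sure the weights get relabelled as $w_{yx}$ (rather than $w_{xy}$) after the swap, which is exactly why the ISE formula features two different weight indices.
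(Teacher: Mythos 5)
Your proposal is correct and takes essentially the same route as the paper's proof: expand the expectation over $x$ into a double sum so that $p(x)$ cancels the denominator of the ratio, swap the summation indices to convert the cross-entropy term into $-w_{yx}\log s_\theta(y)_x$, and absorb the $K$ term into a $\theta$-independent constant. You are in fact a bit more explicit than the paper in tracking how the weight index flips from $w_{xy}$ to $w_{yx}$ under the relabelling, which the paper glosses over by writing the identity for a generic $f(x,y)$.
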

Unfortunately, a Monte Carlo estimate would require sampling an $x$ and evaluating $s_\theta(y)_x$ for all other $y$. For high dimensions, this is intractable, which means we have to sample $y$ uniformly, but this introduces additional variance analogous to that introduced by the Hutchinson trace estimator \citep{Hutchinson1989ASE} for sliced score matching \citep{Song2019SlicedSM}. As a result, implicit score entropy is impractical for large-scale tasks. Instead, we work a denoising score matching loss \citep{Vincent2011ACB} variant of score entropy:
\begin{thm}[Denoising Score Entropy]\label{thm:dse}
    Suppose $p$ is a perturbation of a base density $p_0$ by a transition kernel $p(\cdot | \cdot)$, ie $p(x) = \sum_{x_0} p(x | x_0) p_0(x_0)$. The score entropy $\mathcal{L}_{\rm SE}$ is equivalent (up to a constant independent of $\theta$) to the \textbf{denoising score entropy} $\mathcal{L}_{\rm DSE}$ is
    \begin{equation}
        \underset{\substack{x_0 \sim p_0 \\ x \sim p(\cdot | x_0)}}{\E} \sqbrac{\sum_{y \neq x} w_{xy} \paren{s_\theta(x)_y - \frac{p(y | x_0)}{p(x | x_0)} \log s_\theta(x)_y}}\\
    \end{equation}
\end{thm}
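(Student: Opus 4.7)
The plan is to decompose $\mathcal{L}_{\rm SE}$ into three pieces corresponding to the summands in Equation \ref{eqn:score_entropy}: the linear piece $w_{xy}s_\theta(x)_y$, the logarithmic piece $-w_{xy}\frac{p(y)}{p(x)}\log s_\theta(x)_y$, and the data-only piece $w_{xy}K(p(y)/p(x))$. The third is manifestly independent of $\theta$, so it contributes to the additive constant; the work reduces to rewriting the first two pieces as expectations over the joint $(x_0,x)$ with $x_0\sim p_0$ and $x\sim p(\cdot\mid x_0)$.

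The linear piece requires only the tower property. Writing $\E_{x\sim p}[\cdot]=\sum_x p(x)[\cdot]$ and substituting $p(x)=\sum_{x_0} p(x\mid x_0)p_0(x_0)$ immediately gives $\E_{x_0\sim p_0,\,x\sim p(\cdot\mid x_0)}\bigl[\sum_{y\neq x} w_{xy}s_\theta(x)_y\bigr]$, matching the linear piece of $\mathcal{L}_{\rm DSE}$ verbatim. The logarithmic piece is the one that needs the denoising trick. Expanding the expectation cancels the $p(x)$ in the denominator:
\[
\sum_x p(x)\sum_{y\neq x} w_{xy}\frac{p(y)}{p(x)}\log s_\theta(x)_y \;=\; \sum_x\sum_{y\neq x} w_{xy}\,p(y)\,\log s_\theta(x)_y.
\]
Now I would insert the marginalization $p(y)=\sum_{x_0}p(y\mid x_0)p_0(x_0)$, swap the order of summation, and then multiply and divide by $p(x\mid x_0)$ so that an expectation over $x\sim p(\cdot\mid x_0)$ reappears:
\[
\sum_{x_0}p_0(x_0)\sum_x p(x\mid x_0)\sum_{y\neq x} w_{xy}\frac{p(y\mid x_0)}{p(x\mid x_0)}\log s_\theta(x)_y,
\]
which is exactly the logarithmic piece of $\mathcal{L}_{\rm DSE}$.

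Combining the two matched pieces, the difference $\mathcal{L}_{\rm SE}-\mathcal{L}_{\rm DSE}$ collapses to the difference of the two $K$-contributions, namely $\E_{x\sim p}[\sum_{y\neq x}w_{xy}K(p(y)/p(x))]$ minus $\E_{x_0\sim p_0,\,x\sim p(\cdot\mid x_0)}[\sum_{y\neq x}w_{xy}K(p(y\mid x_0)/p(x\mid x_0))]$. Both quantities depend only on the fixed data/kernel and the weights, not on $\theta$, so their difference is the promised additive constant.

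The computation is essentially a change-of-variables argument, so I do not expect any deep obstacle. The one point of care is the multiply-and-divide by $p(x\mid x_0)$, which is well defined whenever $p(x\mid x_0)>0$ on the support of the joint; for the transition kernels of interest this holds automatically. A minor bookkeeping point is that the two $K$-expectations need to be recognized as finite constants (so that the ``up to a constant'' statement is meaningful), which follows from $p$ being a proper distribution and the weights being nonnegative.
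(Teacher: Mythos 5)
Your proposal is correct and follows essentially the same route as the paper: the only nontrivial step is the logarithmic term, where you cancel $p(x)$, marginalize $p(y)=\sum_{x_0}p(y\mid x_0)p_0(x_0)$, and multiply-and-divide by $p(x\mid x_0)$ to recover a joint expectation — exactly the paper's denoising trick — while the linear term is handled by the tower property and the $K$ terms are absorbed into the $\theta$-independent constant. No gaps; your added remarks on positivity of $p(x\mid x_0)$ and finiteness of the constant are fine but not essential.
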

$\mathcal{L}_{\rm DSE}$ is scalable since Monte Carlo sampling only requires the evaluation of one $s_\theta(x)$, which gives us all $s_\theta(x)_y$, and the variance introduced by $x_0$ is manageable. Additionally, it is particularly appealing for discrete diffusion since the intermediate $p_t$ are all perturbations of the base density $p_0$ (resulting from Equations \ref{eqn:discretediffusion}, \ref{eqn:discrete_euler_base}), enabling us to train with $\mathcal{L}_{\rm DSE}$ using the diffusion transition densities $p_{t | 0}(\cdot | x_0)$ (which we can make tractable).

\subsection{Likelihood Bound For Score Entropy Discrete Diffusion}

\underline{\textbf{Fourth,}} the score entropy can be used to define an ELBO for likelihood-based training and evaluation.

\begin{defi}\label{def:discrete_diffusion_model}
    For our time dependent score network $s_\theta(\cdot, t)$, the parameterized reverse matrix is $\overline{Q}_t^\theta(y, x) = \begin{cases} s_\theta(x, t)_y Q_t(x, y) & x \neq y \\ -\sum_{z \neq x} \overline{Q}_t^\theta(z, y) & x = y \end{cases}$ found by replacing the ground truth scores in Equation \ref{eqn:reverse_discrete}. Our parameterized densities $p_t^\theta$ thus satisfy the following differential equation:
    \begin{equation}
        \frac{dp_{T - t}^\theta}{dt} = \overline{Q}_{T - t}^\theta p_{T - t}^\theta \quad p_T^\theta = p_{\rm base} \approx p_T
    \end{equation}
\end{defi}
The log likelihood of data points can be bounded using an ELBO based off of Dynkin's formula \citep{FloydASP}, which was derived for discrete diffusion models in \citet{Campbell2022ACT}. Interestingly, this takes the form of our denoising score entropy loss weighted by the forward diffusion:
\begin{thm}[Likelihood Training and Evaluation]\label{thm:likeli}
    For the diffusion and forward probabilities defined above,
    \begin{equation}
        -\log p_0^\theta(x_0) \le \mathcal{L}_{\rm DWDSE}(x_0) + D_{KL}(p_{T | 0}(\cdot | x_0) \parallel p_{\rm base})
    \end{equation}
    where $\mathcal{L}_{\rm DWDSE}(x_0)$ is the \textbf{diffusion weighted denoising score entropy} for data point $x_0$
    
    \begin{multline}
        \int_0^T \E_{x_t \sim p_{t | 0}(\cdot | x_0)} \sum_{y \neq x_t} Q_t(x_t, y) \Bigg(s_\theta(x_t, t)_y - \\
        \frac{p_{t | 0}(y | x_0)}{p_{t | 0}(x_t | x_0)} \log s_\theta(x_t, t)_y + K\paren{\frac{p_{t | 0}(y | x_0)}{p_{t | 0}(x_t | x_0)}}\Bigg) dt
    \end{multline}
\end{thm}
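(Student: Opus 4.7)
The plan is to follow the standard continuous-time Markov chain Evidence Lower Bound derivation of \cite{Campbell2022ACT}, using the forward diffusion conditioned on $x_0$ as the variational proposal and then simplifying the resulting path-measure KL divergence until the denoising score entropy integrand emerges. Concretely, I would start from Jensen's inequality $-\log p_0^\theta(x_0) \le \E_q\sqbrac{\log q(x_{(0,T]} \mid x_0) - \log p^\theta(x_0, x_{(0,T]})}$, where $q$ denotes the law of the forward diffusion started at $x_0$ and $p^\theta$ is the parameterized reverse process from Definition \ref{def:discrete_diffusion_model} started from $\pi$. Standard CTMC theory decomposes the right-hand side as a boundary term at time $T$---comparing the end marginal $p_{T \mid 0}(\cdot \mid x_0)$ of $q$ to the starting distribution $\pi$ of $p^\theta$ and yielding $D_{KL}(p_{T \mid 0}(\cdot \mid x_0) \parallel \pi)$---plus a path-space KL between the time reversal of $q$ and the parameterized reverse process $p^\theta$.

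Next, I would invoke the Radon--Nikodym formula for jump processes to rewrite the path KL as the rate-matrix integral
\begin{equation*}
\int_0^T \E_{x_t \sim p_{t \mid 0}(\cdot \mid x_0)} \sum_{y \neq x_t} \sqbrac{\overline{Q}_t^\theta(y, x_t) - \overline{Q}_t(y, x_t) + \overline{Q}_t(y, x_t) \log \frac{\overline{Q}_t(y, x_t)}{\overline{Q}_t^\theta(y, x_t)}} dt,
\end{equation*}
where the reverse of the $x_0$-conditional forward process has rate $\overline{Q}_t(y, x_t) = \frac{p_{t \mid 0}(y \mid x_0)}{p_{t \mid 0}(x_t \mid x_0)} Q_t(x_t, y)$---the $x_0$-conditional analogue of Equation \ref{eqn:reverse_discrete}---while the model rate is $\overline{Q}_t^\theta(y, x_t) = s_\theta(x_t, t)_y Q_t(x_t, y)$.

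The final step is a direct substitution: writing $r = p_{t \mid 0}(y \mid x_0) / p_{t \mid 0}(x_t \mid x_0)$ and $s = s_\theta(x_t, t)_y$, the local integrand becomes $Q_t(x_t, y)\sqbrac{s - r + r \log(r/s)}$, which equals $Q_t(x_t, y)\sqbrac{s - r \log s + K(r)}$ since $K(r) = r\log r - r$. This is exactly the DWDSE integrand with weights $w_{x_t, y} = Q_t(x_t, y)$, which explains the ``diffusion-weighted'' designation. Combining the $t = T$ boundary term with this integral then yields the stated inequality.

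The main obstacle is the path-KL decomposition in the second step: one must justify, for time-inhomogeneous CTMCs, both the conditioning on $x_0$ (so that the relevant true reverse rate uses the conditional marginal $p_{t \mid 0}(\cdot \mid x_0)$ rather than $p_t$) and the Girsanov-type Radon--Nikodym formula that produces the rate-matrix KL integrand. Both pieces are established in \cite{Campbell2022ACT}, so their CTMC ELBO can be taken essentially off the shelf and the remainder of the argument reduces to the purely algebraic identification carried out in the third step.
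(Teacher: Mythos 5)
Your proposal is correct, and it follows the same overall skeleton as the paper's proof: bound $-\log p_0^\theta(x_0)$ by a path-measure KL between the reversal of the noising process started at $\delta_{x_0}$ and the learned reverse process, peel off the terminal term $D_{KL}(p_{T|0}(\cdot | x_0) \parallel \pi)$, express the remaining path KL as a rate-matrix integral via the CTMC machinery of \citet{Campbell2022ACT}, and finish with the algebraic identity $s - r + r\log(r/s) = s - r\log s + K(r)$ under the weights $Q_t(x_t, y)$. The one place where you genuinely diverge is the middle step: you write the path KL directly against the reversal of the $x_0$-conditional forward process, whose rates $\frac{p_{t|0}(y|x_0)}{p_{t|0}(x_t|x_0)} Q_t(x_t,y)$ (the conditional analogue of Equation \ref{eqn:reverse_discrete}) already contain the denoising ratio, so the DWDSE integrand appears immediately. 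The paper instead obtains, from Dynkin's formula, an ``implicit-form'' integrand in which the score network is evaluated at the neighbors, i.e.\ a term $Q_t(y, x_t)\log s_\theta(y)_{x_t}$, and then converts it to the denoising form by the marginalization swap used (in reverse) in the proof of Proposition \ref{prop:ise}. Your organization is slightly cleaner in that it avoids this swap entirely, at the cost of having to justify that the time reversal of the conditional process is itself a Markov jump process with the stated rates and marginals $p_{t|0}(\cdot|x_0)$ — a fact the paper uses implicitly and which is covered by the same reversal result it cites, so no gap remains.
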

Crucially, this result allows us to directly models based on their likelihood values (and the related perplexity scores), the core metric for language modeling tasks. In particular, we can train and evaluate an upper bound.

\begin{remark}
    The DWDSE (and the implicit version) can be derived from the general framework of \citet{Benton2022FromDD} assuming a concrete score parameterization. In particular, the implicit version coincides with the likelihood loss introduced in \citet{Campbell2022ACT}.
\end{remark}

\subsection{Practical Implementation}

\underline{\textbf{Fifth,}} score entropy can be scaled to high dimensional tasks.

In practice, our state factorizes into sequences $\mathcal{X} = \{1, \dots, n\}^d$ to form sequences $\vec{x} = x^1 \dots x^d$ (e.g. sequences of tokens or image pixel values). As a general $Q_t$ would be of exponential size, we instead choose a sparse structured matrix that perturbs tokens independently with a matrix $Q_t^{\rm tok}$. In particular, the nonzero entries of $Q_t$ are given by
\begin{equation}
    \hspace{-0.2cm}Q_t(x^1 \dots x^i \dots x^d, x^1 \dots \widehat{x}^i \dots x^d) = Q_t^{\rm tok}(x^i, \widehat{x}^i)
\end{equation}
Since $\mathcal{L}_{\rm DWDSE}$ weights the loss by $Q_t(x, y)$, this token level transition $Q_t$ renders most ratios irrelevant. In particular, we only need to model all ratios between sequences with Hamming distnace $1$, so we can build our score network $s_\theta(\cdot, t): \{1, \dots, n\}^d \to \R^{d \times n}$ as a seq-to-seq map:
\begin{equation}
    (s_\theta(x^1 \dots x^i \dots x^d, t))_{i, \widehat{x}^i} \approx \frac{p_t(x^1 \dots \widehat{x}^i \dots x^d)}{p_t(x^1 \dots x^i \dots x^d)} 
\end{equation}
To fully compute $\mathcal{L}_{\rm DWDSE}$, we just need to calculate the forward transition $p_{t | 0}^{\rm seq}(\cdot | \cdot)$. Luckily, this decomposes as each token is perturbed independently:
\begin{equation}
    p_{t | 0}^{\rm seq}(\vec{\widehat{x}} | \vec{x}) = \prod_{i = 1}^d p_{t | 0}^{\rm tok}(\widehat{x}^i | x^i)
\end{equation}
For each $p_{t | 0}^{\rm tok}(\cdot | \cdot)$, we employ the previously discussed strategy and set $Q_t^{\rm tok} = \sigma(t) Q^{\rm tok}$ for a noise level $\sigma$ and a fixed transition $Q^{\rm tok}$. This avoids numerical integration as, if we define $\overline{\sigma}(t)$ as the cumulative noise $\int_0^t \sigma(s) ds$, we have:
\begin{gather}
    p_{t | 0}^{\rm tok}(\cdot | x) = x\text{-th column of } \exp\paren{\overline{\sigma}(t) Q^{\rm tok}}
\end{gather}
There are some practical consequences that render most $Q^{\rm tok}$ unusable for large scale experiments (e.g. for GPT-2 tasks, $n=50257$). In particular, one is not able to store all edge weights $Q_{\rm tok}(i, j)$ since this takes around $20$ GB of GPU memory and is extremely slow to access. Furthermore, one must be able to compute the columns $\exp(\overline{\sigma}(t) \cdot Q^{\rm tok})$ to get the transition ratios, but this must avoid matrix-matrix multiplication again can't be stored in memory.

To sidestep these issues, we follow prior work \citep{Austin2021StructuredDD, Campbell2022ACT} and use two standard matrices with special structures. They arise, respectively, from considering a fully connected graph structure and from introducing a MASK absorbing state (similar to the BERT language modeling paradigm \citep{Devlin2019BERTPO}):
\begin{gather}
    Q^{\rm uniform} = \begin{bmatrix} 1 - N & 1 & \cdots & 1\\ 1 & 1 - N & \cdots & 1\\ \vdots & \vdots & \ddots & \vdots \\ 1 & 1 & \cdots & 1 - N\end{bmatrix}\\
    Q^{\rm absorb} = \begin{bmatrix} -1 & 0 & \cdots & 0 & 0\\ 0 & -1 & \cdots & 0 & 0\\ \vdots & \vdots & \ddots & \vdots & \vdots \\ 0 & 0 & \cdots & -1 & 0\\ 1 & 1 & \cdots & 1 & 0\end{bmatrix}
\end{gather}
With such a structured $Q$, one can quickly and cheaply compute all values in $\mathcal{L}_{\rm DWDSE}$. As such, our training iteration is about as fast and uses a similar amount of memory as standard autoregressive training. In particular, our training algorithm is given in Algorithm \ref{alg:train}.
\section{Simulating Reverse Diffusion with Concrete Scores}

Given our scores $s_\theta$, we now derive various strategies for simulating a path $\vec{x}_t = x_t^1 x_t^2 \dots x_t^d \sim p_t$ of the reverse diffusion process. Notably, the additional information that we gain from $s_\theta$ being an approximate ratio of $p_t$ can be used to enhance the sampling process.

\subsection{Time-Reversal Strategies}\label{sec:simulation:analytic}

To simulate the diffusion in Definition \ref{def:discrete_diffusion_model}, one may be tempted to use the Euler strategy from Equation \ref{eqn:discrete_euler_base}. However, as noted in \citet{Campbell2022ACT}, this is inefficient because the structure of $Q_t^{\rm seq}$ only allows one position to be modified per step. Instead, a natural alternative has been to use $\tau$-leaping \citep{Gillespie2001ApproximateAS}, which performs an Euler step at each position simultaneously. In particular, given a sequence $\vec{x}_t$, we construct $\vec{x}_{t - \Delta t}$ by sampling each token $x_{t - \Delta t}^i$ (independently) from the corresponding probability
\begin{equation}\label{eqn:tau_leap_euler}
    \delta_{x_t^i}(x_{t - \Delta t}^i) + \Delta t Q_t^{\rm tok}(x_t^i, x_{t - \Delta t}^i) s_\theta(\vec{x}_t, t)_{i, x_{t - \Delta t}^i}
\end{equation}
While $\tau$-leaping is a viable simulation strategy, it is agnostic to fact that our $s_\theta$ approximates the true concrete score. In particular, knowing all $\frac{p_t(y)}{p_t(x)}$ enables optimal denoising, analogous to Tweedie's theorem \citep{Efron2011TweediesFA}:
\begin{thm}[Discrete Tweedie's Theorem]\label{thm:tweedie}
    Suppose that $p_t$ follows the diffusion ODE $dp_t = Q p_t$. Then the true denoiser is given by
    
    \begin{equation}
        p_{0 | t}(x_0 | x_t)=\paren{\exp(-tQ) \begin{bmatrix} \frac{p_t(i))}{p_t(x_t)} \end{bmatrix}_{i = 1}^N}_{x_0} \exp(t Q)(x_t, x_0)
    \end{equation}
\end{thm}
Unfortunately, we do not know all of the ratios (only ratios between Hamming distance 1 sequences). However, we can use this intuition to build a Tweedie denoiser analogue of $\tau$-leaping. In particular, we replace the token transition probabilities (for $x_{t - \Delta t}^i$) with the values
\begin{gather}\label{eqn:tau_leap_analytic}
    \hspace{-0.5cm}\big(\exp(-\sigma_t^{\Delta t} Q) s_\theta(\vec{x}_t, t)_i\big)_{x_{t - \Delta t}^i} \exp(\sigma_t^{\Delta t} Q)(x_t^i, x_{t - \Delta t}^i)\\
    \text{where } \sigma_t^{\Delta t} = (\overline{\sigma}(t) - \overline{\sigma}(t - \Delta t))
\end{gather}
This generalizes the theorem but enforces the tau-leaping independence condition and, in fact, is optimal:
\begin{thm}[Tweedie $\tau$-leaping]\label{thm:mean_param_score}
    Let $p_{t - \Delta t | t}^{\rm tweedie}(\vec{x}_{t - \Delta t} | \vec{x}_t)$ be the probability of the token update rule defined by Equation \ref{eqn:tau_leap_analytic}. Assuming $s_\theta$ is learned perfectly, this minimizes the KL divergence with the true reverse $p_{t - \Delta t | t}(\vec{x}_{t - \Delta t} | \vec{x}_t)$ for all $\tau$-leaping strategies (i.e. token transitions are applied independently and simultaneously). 
\end{thm}

These simulation algorithms are unified in Algorithm \ref{alg:sample}.

\subsection{Arbitrary Prompting and Infilling}

Our concrete score can also be used to enable greater control over the generative process. This is due to the fact that we are modeling a function of the probability, allowing us to include conditional information through Bayes' rule. In particular, we consider the infilling problem
\begin{equation}
    p_t(\vec{x}^\Omega | \vec{x}^{\overline{\Omega}} = \vec{y}) \quad \Omega \text{ unfilled indices} \quad \overline{\Omega} \text{ filled}
\end{equation}
As an example, a standard autoregressive conditional generation would have $\overline{\Omega} = \{1, 2, \dots, c\}$ and $\Omega = \{c + 1, c + 2, \dots, d\}$. By Bayes' rule, the conditional scores can be recovered exactly from the unconditional score.
\begin{equation}
    \frac{p_t(\vec{x}^\Omega = \vec{z}' | \vec{x}^{\overline{\Omega}} = \vec{y})}{p_t(\vec{x}^\Omega = \vec{z} | \vec{x}^{\overline{\Omega}} = \vec{y})} = \frac{p_t(\vec{x} = \vec{z}' \oplus_\Omega \vec{y})}{p_t(\vec{x} = \vec{z} \oplus_\Omega \vec{y})}
\end{equation}
where $\oplus_\Omega$ is concatenation along $\Omega$ and $\overline{\Omega}$. Since the unconditional and conditional scores coincide, we can use our $s_\theta$ (learned unconditionally) for conditional sampling (given arbitrary $\overline{\Omega}$). For a $\tau$-leaping update rule (Equation \ref{eqn:tau_leap_euler} or \ref{eqn:tau_leap_analytic}), one would only modify by changing the values at $\Omega$. An explicit pseudocode of this is given in Algorithm \ref{alg:cond_sample}.


\section{Experiments}

We now empirically validate that our score entropy discrete diffusion (SEDD) model on a variety of language modeling tasks. We measure both perplexity (i.e. likelihood estimation capabilities) as well as generation quality, finding that our method performs quite well in both aspects.

\subsection{Model and Training Setup}

\begin{table*}[t]
    \centering
    \begin{tabular}{l|l|ccccc}
    Size & Model & LAMBADA & WikiText2 & PTB & WikiText103 & 1BW \\ \hline
    Small & \multicolumn{1}{l|}{GPT-2} & \textbf{45.04} & 42.43 & 138.43 & 41.60 & \textbf{75.20}\\
    & \multicolumn{1}{l|}{SEDD Absorb} & $\le$50.92 & $\le$\textbf{41.84} & $\le$\textbf{114.24} & $\le$\textbf{40.62} & $\le$79.29\\
    & \multicolumn{1}{l|}{SEDD Uniform} & $\le$65.40 & $\le$50.27 & $\le$140.12 & $\le$49.60 & $\le$101.37\\ 
    & \multicolumn{1}{l|}{D3PM} & $\le$93.47 & $\le$77.28 & $\le$200.82 & $\le$75.16 & $\le$138.92\\
    & \multicolumn{1}{l|}{PLAID} & $\le$57.28 & $\le$51.80 & $\le$142.60 & $\le$50.86 & $\le$91.12\\ \hline
    Medium & \multicolumn{1}{l|}{GPT-2} & \textbf{35.66} & 31.80 & 123.14 & 31.39 & \textbf{55.72}\\
    & \multicolumn{1}{l|}{SEDD Absorb} & $\le$42.77 & $\le$\textbf{31.04} & $\le$\textbf{87.12} & $\le$\textbf{29.98} & $\le$61.19\\
    & \multicolumn{1}{l|}{SEDD Uniform} & $\le$51.28 & $\le$38.93 & $\le$102.28 & $\le$36.81 & $\le$79.12\\
    \end{tabular}
    \caption{\textbf{Zero-shot unconditional perplexity ($\downarrow$) on a variety of datasets.} For a fixed size, the best perplexity is \textbf{bolded}. Our SEDD model with absorbing transition beats GPT-2 \citep{Radford2019LanguageMA} on a majority of the tasks and entirely outperforms prior language diffusion models \citep{Austin2021StructuredDD, Gulrajani2023LikelihoodBasedDL}.}
    \label{tbl:perplexity}
\end{table*}

Our core model is based on the diffusion transformer architecture \citep{Peebles2022ScalableDM}, which incorporates time conditioning into a standard encoder-only transformer architecture \citep{Vaswani2017AttentionIA, Devlin2019BERTPO}, although we make some minor modifications such as employing rotary positional encoding \citep{Su2021RoFormerET}.

We construct SEDD Absorb and SEDD Uniform, which correspond to the matrices $Q^{\rm uniform}$ and $Q^{\rm absorb}$ respectively. We tested a geometric noise schedule (that interpolates between $10^{-5}$ and $20$), as well as a log-linear noise schedule (the number of changed tokens for total noise $\overline{\sigma}(t)$ is approximately $td$ for both transitions), which helps SEDD Absorb for perplexities. Outside of this, we did not systemically explore noise schedules or alternative loss weightings, although these could likely improve generation quality.

When training, we employ sentence packing to create uniform length blocks to feed to our model, which is done typically for language modeling tasks. The only exception to this rule is our experiment on text8, which randomly samples contiguous subsequences to match prior work \citep{Austin2021StructuredDD} (although we found that this did not substantially change results). We also matched architecture hyperparameters with prior work (including number of layers, hidden dimension, attention heads, etc...), although our models have slightly more parameters ($\approx 5-10\%$) than a typical transformer due to time conditioning. We also use the same tokenizers as prior work (which otherwise could be a source of artifacts) as well as the same data splits.

\subsection{Language Modeling Comparison}

We begin by evaluating our model on core language modeling (effectively likelihood-based modeling) on three common datasets across a variety of scales.

\subsubsection{Text 8 Dataset}

We compare on the text8 dataset, a small, character level language modeling task. We follow \citet{Austin2021StructuredDD} for network hyperparameters and dataset splits and compare with methods that employ a similar model size. 

We report bits per character (BPC) in Table \ref{tbl:text8}. SEDD outperforms other non-autoregressive models and is only beaten by an autoregressive transformer and the discrete flow (which incorporates an autoregressive base distribution) \citep{tran2019discrete}. Furthermore, SEDD substantially improves upon D3PM \citep{Austin2021StructuredDD}, despite both being built from the same discrete diffusion principles. 

\subsubsection{One Billion Words Dataset}

\begin{table}[]
    \centering
    \begin{tabular}{l|l|r}
    Type & Method &  BPC ($\downarrow$) \\ \hline
    Autoregressive Backbone& IAF/SCF & 1.88\\
     & AR Argmax Flow & 1.39\\
    & Discrete Flow & \textbf{1.23}\\
    & Autoregressive & \textbf{1.23}\\ \hline
    Non-autoregressive & Mult. Diffusion & $\le$ 1.72\\
    & MAC & $\le$ 1.40\\
    & BFN & $\le$ 1.41\\
    & D3PM Uniform & $\le$ 1.61\\
    & D3PM Absorb & $\le$ 1.45\\ \hline
    Ours (NAR) & SEDD Uniform & $\le$ 1.47\\
    & SEDD Absorb & $\le$ \textbf{1.39}\\
    \end{tabular}
\caption{\textbf{Bits Per Character on text8.} Our SEDD models achieve second-best overall result (best for non-autoregressive), only being beaten out by the autoregressive model and a discrete flow (which uses an autoregressive model as a backbone) by a small margin. SEDD also substantially improves upon prior the discrete diffusion model D3PM \citep{Austin2021StructuredDD}.}
\label{tbl:text8}
\vspace{-0.5cm}
\end{table}

We also test SEDD on One Billion Words, a more medium sized and real world dataset. We follow \citet{He2022DiffusionBERTIG} for the tokenization, training, and model size configurations. In particular, our baselines are all around the size of GPT-2 small. Following \citet{He2022DiffusionBERTIG}, we compare primarily against other language diffusion models, although we also train a standard autoregressive transformer as a benchmark.

We report perplexity values in Table \ref{tbl:1bw}. Our SEDD model outperforms all other diffusion language modeling schemes by $50$-$75\%$ lower perplexity (in particular D3PM). Furthermore, SEDD is within $1$ perplexity of the autoregressive model, likely matching since we only report an upper bound.

\begin{table}[]
    \centering
    \begin{tabular}{l|l|r}
    Type & Method & Perplexity ($\downarrow$)\\ \hline
    Autoregressive & Transformer & \textbf{31.98}\\ \hline
    Diffusion & D3PM Absorb & $\le$ 77.50\\
    & Diffusion-LM & $\le$ 118.62 \\
    & BERT-Mouth & $\le$ 142.89\\
    & DiffusionBert & $\le$ 63.78\\ \hline
    Ours (Diffusion) & SEDD Uniform & $\le$ 40.25 \\
    & SEDD Absorb & $\le$ \textbf{32.79} \\
    \end{tabular}
\caption{\textbf{Test perplexities on the One Billion Words Dataset.} The autoregressive result is an exact likelihood, while the diffusion results are upper bounds. SEDD beats all other discrete diffusion models (by at least $2\times$) while matching the autoregressive baseline.}
\label{tbl:1bw}
\vspace{-15pt}
\end{table}

\begin{figure*}[t]
  \centering

  \begin{subfigure}[b]{0.49\textwidth}
    \includegraphics[width=\textwidth]{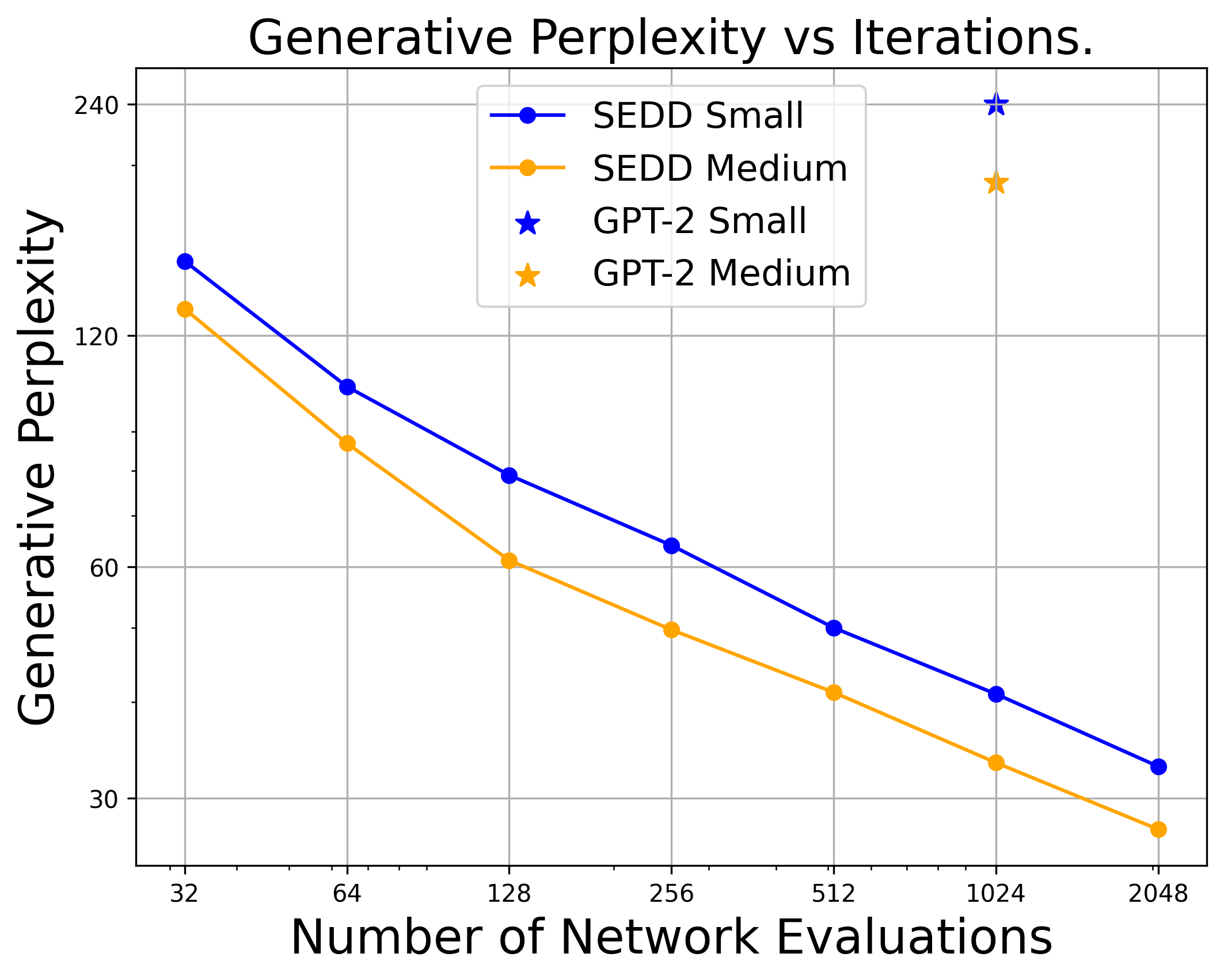}
    \caption{Generative Perplexity $(\downarrow)$ vs. Sampling Iterations.}
    \label{fig:gen_perpl_iterations}
  \end{subfigure}
  \hfill
  \begin{subfigure}[b]{0.47\textwidth}
        \begin{tabular}{|c| p{6.75cm}|}
            \hline
            \rotatebox[origin=r]{90}{\hspace{2pt} GPT-2 S} & {\fontfamily{lmr}\selectfont a hiring platform that "includes a fun club meeting place," says petitioner’s AQQFredericks. They’s the adjacent marijuana-hop. Others have allowed 3B Entertainment} \\
            \hline
            \rotatebox[origin=r]{90}{\hspace{2pt} GPT-2 M} & {\fontfamily{lmr}\selectfont  misused, whether via Uber, a higher-order reality of quantified impulse or the No Mass Paralysis movement, but the most shamefully universal example is gridlock} \\
            \hline
            \rotatebox[origin=r]{90}{\hspace{2pt} SEDD S\hspace{0.5pt}} & {\fontfamily{lmr}\selectfont As Jeff Romer recently wrote, ``The economy has now reached a corner - 64\% of household wealth and 80\% of wealth goes to credit cards because of government austerity} \\
            \hline
            \rotatebox[origin=r]{90}{\hspace{2pt} SEDD M \hspace{0pt}} & {\fontfamily{lmr}\selectfont Wyman worked as a computer science coach before going to work with the U.S. Secret Service in upstate New York in 2010. Without a license, the Secret Service will have to} \\
            \hline
        \end{tabular}
        \caption{Generated Text (small models)}
    \label{fig:subfig2}
  \end{subfigure}
  \caption{\textbf{Quality evaluation of unconditionally generated text.} We compare SEDD and GPT-2 by the perplexity of their analytically generated sequences. Our SEDD models consistently outperform GPT-2, interpolating between a $32 \times$ speedup and a $6$-$8\times$ improvement based on the chosen step size. The generated text reflects this improved generation capability, as our samples are far more coherent. Additional samples and ablations can be found in Appendix \ref{app:add:samples}}
  \label{fig:sample_quality}
\end{figure*}

\subsubsection{GPT-2 Zero Shot Tasks}

Finally, we compare SEDD against GPT-2 \citep{Radford2019LanguageMA}. We train on OpenWebText as the original WebText dataset has not been made available (this is typical practice and does not meaningfully affect results in practice) \citep{Gokaslan2019OpenWeb} and test on the LAMBADA, WikiText2, PTB, WikiText103, and One Billion Words datasets (which were all of the GPT-2 zero-shot tasks that measured perplexity). We recompute baseline likelihoods for all datasets except 1BW, where we encountered unexpected behavior with the public implementations. Our likelihood computation changes from the original setting since we evaluate unconditionally (i.e. without a sliding window), and this results in higher values than originally reported.

Our results are reported in Table \ref{tbl:perplexity}. Our SEDD Absorb beats GPT-2 on a majority of the zero-shot tasks across both sizes. To the best of our knowledge, this is the first time where a non-autoregressive language model has matched a modern, reasonably sized, and well-known autoregressive model for perplexities. We also compare against the most competitive continuous \citep{Gulrajani2023LikelihoodBasedDL} and discrete \citep{Austin2021StructuredDD} diffusion baselines, seeing a large improvement over both.

\subsection{Language Generation Comparison}

With our trained models, we compare against prior work in terms of generation quality. In particular, we compare GPT-2 with our SEDD Absorb on a variety of scales. Results for SEDD Uniform are given in Appendix \ref{app:additional}.

\subsubsection{Unconditional Generation}

We first compare the quality of unconditional samples between GPT-2 and SEDD. As most language metrics are meant for comparing conditional generations \citep{pillutla2021mauve}, we instead measure the generative perplexity of sampled sequences (using a GPT-2 large model for evaluation). This is a simple and common metric \citep{han2022ssd, Dieleman2022ContinuousDF} but can easily be ``hacked" by simple distribution annealing methods. So, we compare analytically sampled generations (i.e. no temperature scaling).

For SEDD, we simulate using 32 to 2048 steps, which approximates the learned distribution with minimal error for a large number of steps (the sequences are length 1024). Our results (both the measured generative perplexity and some samples) are shown in Figure \ref{fig:sample_quality}. SEDD matches GPT-2 quality using 32$\times$ fewer network evaluations and outperforms by $6$-$8\times$ when using the full 2048 steps. Furthermore, SEDD forms a predictable log-log linear pareto frontier between the number of sampling steps and generative perplexity. However, each network evaluation is different due to the KV-cache, which introduces a cost benefit tradeoff that we discuss more in Section \ref{sec:related}.

\begin{table*}[t]
    \centering
    \begin{tabular}{|p{17cm}|}
        \hline
            {\fontfamily{lmr}\selectfont \textcolor{blue}{A bow and arrow is a traditional weapon that enables an attacker to} attack targets at a range within a meter or maybe two meters. They have a range far longer than a human can walk, and they can be fired \dots} \\
            \hline
            {\fontfamily{lmr}\selectfont $\dots$ \textcolor{blue}{skydiving is a fun sport} that makes me feel incredibly silly. I think I may’ve spent too much, but it could’ve been amazing! While sky diving gives us exercise and fun, \textcolor{blue}{scuba diving} is an act of physical fitness, \dots} \\
            \hline
             {\fontfamily{lmr}\selectfont $\dots$ no one expected the results to much better than last year's one-sided endorsement. Nearly 90 percent of the results were surveyed as "independent," an promising \textcolor{blue}{result for school children across the country.}} \\
            \hline
            {\fontfamily{lmr}\selectfont $\dots$ results show that \textcolor{blue}{Donald Trump and Hillary Clinton} are in 38 states combined with less than 1\% of the national vote. In a way, it’s Trump and Hillary Clinton who will work overtime to get people to vote this $\dots$} \\
            \hline
        \end{tabular}
        \caption{\textbf{Conditionally Generated Text.} Prompt tokens are given in \textcolor{blue}{blue}. Our model is able to generate meaningful text with prompt tokens in the front, the end, the middle, or even split up. Additional samples are given in Appendix \ref{app:add:samples}.}
        \label{tbl:gen_text_infill}
\end{table*} 

\subsubsection{Infilling Conditional Generation}

Finally, we showcase SEDD's ability for conditional generation. We generate samples conditioned on a fixed amount of input text (from the WebText dataset) and compare their MAUVE scores \citep{pillutla2021mauve}. For SEDD, we consider two prompting strategies: standard generation given the beginning and infilling using the beginning and end, although obviously more sampling strategies exist (and several are visualized in Table \ref{tbl:gen_text_infill}).

We compare against GPT-2 and SSD-LM \citep{han2022ssd}, a competitive language diffusion model built for this task (all models are medium sized). Interestingly, a critical component for both baselines is distribution annealing: nucleus sampling for autoregressive modeling \citep{holtzman2019curious} (which clips the token probability) and thresholding for diffusion \citep{Li2022DiffusionLMIC, lou2023reflected} (which constrains generation to disallow paths in low probability spaces). As introducing similar annealing methods for SEDD is out of scope for this paper, we compare against both the annealed and un-annealed baselines samples.

Our results are given in Table \ref{tbl:conditional}. SEDD is highly competitive with the best configuration for both baselines, in fact beating both when using standard prompting. This is rather notable since SEDD does not use distribution annealing and does not explicitly encode left to right prompting as an architectural inductive bias (while GPT-2 and SSD-LM were trained explicitly for autoregressive-like generation).

\begin{table}[]
    \centering
    \begin{tabular}{l|l|l}
    \hline
    Method & Annealing & Mauve ($\uparrow$) \\ \hline
    GPT-2 & Nucleus-0.95  & 0.955 \\
    & None & 0.802 \\ 
    SSD-LM & Logit Threshold-0.95 & 0.919 \\ 
    & None & 0.312 \\ \hline
    SEDD Standard & None & \textbf{0.957}\\
    SEDD Infill & None & 0.942
    \end{tabular}
    \caption{\textbf{Evaluation of conditionally generated text.} SEDD with standard prompting beats both GPT-2 and SSD-LM. SEDD also offers more flexibility (enabling infilling generation with comparable performance) and does not require distribution annealing techniques for good generation.}
    \label{tbl:conditional}
    \vspace{-0.5cm}
\end{table}


\section{Related Work}\label{sec:related}

\textbf{Continuous Diffusion Models for Text Data.} Initially proposed by \citet{Li2022DiffusionLMIC}, continuous language diffusion models embed tokens in a latent space, learn a diffusion model there, and take the nearest neighbor to dequantize. While initial versions struggled, these models have achieved significant results by iterating on several empirical components. For example, prior works improve downstream performance with alternative loss functions (moving away from likelihood-based score matching) \citep{han2022ssd, mahabadi2023tess} and explicitly encoding conditional information (e.g. inputting an infilling mask) \citep{gong2023diffuseq, Dieleman2022ContinuousDF}. Additionally, distribution annealing methods like thresholding \citep{Li2022DiffusionLMIC} and classifier-free guidance \citep{Ho2022ClassifierFreeDG} can further improve generation quality, although recent work has shown that methods like self-conditioning \citep{strudel2022self} and designing a less sparse embedding space (e.g. based on bits) \citep{chen2022analog} can obviate the need for such methods. Finally, \citet{Gulrajani2023LikelihoodBasedDL} showed that, with many surgical changes to the training paradigm, it is possible for language diffusion models to begin approaching autoregressive performance for likelihoods.

\textbf{Discrete Diffusion Models.} Most discrete diffusion works follow the framework set out by D3PM \citep{Austin2021StructuredDD} which mimics ``mean prediction" \citep{Ho2020DenoisingDP}. These discrete diffusion methods are largely applied to fields other than language (e.g. images), likely due to empirical challenges. Despite this, some works have shown strong performance on language, particularly for seq-to-seq tasks and more efficient generation \citep{Zheng2023ARD, chen2023fast, ye2023dinoiser}. Notably, from these works discrete diffusion has tended to be advantageous over continuous diffusion in reducing network evaluations.

\textbf{SEDD vs Prior Work.} SEDD is a discrete diffusion model that focuses on score matching, the crucial ingredient for continuous diffusions \citep{Song2019GenerativeMB,Ho2020DenoisingDP}. Many such works also focus on reversing a discrete diffusion process \citep{Campbell2022ACT, Benton2022FromDD, Sun2022ScorebasedCD}, so score entropy is naturally related with prior training objectives. However, SEDD focuses on a principled, scalable, and performant objective (namely denoising score entropy), filling in shortcomings found in previous works. In particular, prior methods train either with the equivalent of implicit score entropy (which is intractable and high variance) or propose alternate losses that suffer from other issues. These critical differences enable large improvements for language tasks, where prior discrete diffusion models have conspicuously struggled on.

Furthermore, SEDD achieves better results (for both perplexity and generation) than even continuous diffusion models (without resorting to empirically driven heuristics). This is desirable since discrete data should necessitate a novel approach. Future work could adapt empirical designs from continuous diffusion, further improving performance.

Finally, SEDD challenges autoregressive models, achieving competitive perplexities (beating GPT-2) and generation quality (beating nucleus sampling). While there is still a large gap with modern large language models, we believe that future work can bridge this using SEDD as a backbone. 

\textbf{SEDD vs Autoregressive Sampling Iterations.} SEDD and autoregressive models have significantly different sampling procedures due to the introduction of the KV-cache for standard decoder-only transformer models. In particular, this complicates the inference code (as each network pass changes from being a standard full batch forward) and trades off speed with memory. For example, for our (known) unoptimized codebase and the existing huggingface transformers library \citep{wolf-etal-2020-transformers}, we observed that SEDD matches autoregressive inference time when using around 100 steps but can increase the batch size by roughly $4-6$ times by removing the KV-cache memory. Future work will likely decrease the steps required for optimal generation (similar to existing work in standard diffusion \citep{song2021denoising}) which can improve this tradeoff.

\section{Conclusion}

We have introduced score entropy discrete diffusion (SEDD) models, a discrete diffusion model that is parameterized by the concrete score and can be trained efficiently with our novel score entropy loss. SEDD beats previous language diffusion models and rivals autoregressive models for both perplexity and quality. We hope that future work can build off our framework to defines alternatives to the modern autoregressive language modeling paradigm.

\section*{Impact Statement}

This paper proposes work that advances the field of natural language generation. Outside of existing ethical questions for this area (e.g. bias, toxicity, fake content), our approach does not present any specific danger as the core work is largely theoretical and not at the scale to pose a specific problem.

\section*{Acknowledgements}

This project was supported by NSF (\#1651565), ARO (W911NF-21-1-0125), ONR (N00014-23-1-2159), CZ Biohub, a Stanford HAI GCP grant. AL is supported by a NSF Graduate Research Fellowship.

\nocite{langley00}

\bibliography{refs}
\bibliographystyle{icml2024}

\newpage
\appendix
\onecolumn

\section{Proof of Main Results}

\begin{proof}[Proof of Prop \ref{prop:consist}]
    Given infinite samples, the loss becomes equivalent to minimizing
    \begin{equation}
        \min_\theta \sum_{x, y \neq x} p(x) w_{xy}\paren{s_\theta(x)_y - \frac{p(y)}{p(x)} \log s_\theta(x)_y}
    \end{equation}
    where we have removed constants not depending on $\theta$. This is minimized when 
    \begin{equation}
        s_\theta(x)_y - \frac{p(y)}{p(x)} \log s_\theta(x)_y
    \end{equation}
    is minimized for all $x, y$. Taking a derivative with respect to $s$ and setting to $0$, we see that this occurs when $s_\theta(x)_y = \frac{p(y)}{p(x)}$, which can be easily checked to be optimal as the function is convex as a function of $s$. One can check that the loss is $0$ at the minimum.
\end{proof}

\begin{proof}[Proof of Prop \ref{prop:ise}]
    The trick is the categorical equivalent of the divergence theorem. In particular, we have
    \begin{align*}
        \E_{x \sim p} \sum_{y \neq x} \frac{p(y)}{p(x)} f(x, y) &= \sum_{x, y: x \neq y} \frac{p(y)}{p(x)} p(x) f(x, y)\\
        &= \sum_{x, y: x \neq y} p(y) f(x, y)\\
        &= \E_{y \sim p} \sum_{x \neq y} f(x, y)\\
        &= \E_{x \sim p} \sum_{y \neq x} f(y, x)
    \end{align*}
    for abitrary $f$. By setting $f(x, y) = w_{xy} \log s_\theta(x)_y$, we get that
    \begin{align*}
        &\E_{x \sim p} \sqbrac{\sum_{y \neq x} w_{xy} \paren{s_\theta(x)_y - \frac{p(y)}{p(x)} \log s_\theta(x)_y + K\paren{\frac{p(y)}{p(x)}}}} \\
        &= \E_{x \sim p} \sqbrac{\sum_{y \neq x} w_{xy} s_\theta(x)_y - w_{yx} \log s_\theta(y)_x + w_{xy} K\paren{\frac{p(y)}{p(x)}}}
    \end{align*}
    which is the desired equivalent (as the last term does not depend on $\theta$).
\end{proof}

\begin{proof}[Proof of Thm \ref{thm:dse}]
    This is similar to the same denoising variant for concrete score matching. We just need to show that the $\log s_\theta(x_t)_y \frac{p_t(y)}{p_t(x)}$ marginalizes out, since everything else does not change or is a constant.
    \begin{align*}
        \E_{x \sim p} \sum_{y \neq x} f(x, y) \frac{p(y)}{p(x)} &= \sum_{y \neq x} f(x, y) p_t(y)\\
        &= \sum_{y \neq x} \sum_{x_0} f(x_t, y) p(y | x_0) p_0(x_0)\\
        &= \E_{x_0 \sim p_0} \sum_{y \neq x} f(x, y) \frac{p(y | x_0)}{p(x | x_0)} p(x | x_0)\\
        &= \E_{x_0 \sim p_0, x \sim p(\cdot | x_0)} \sum_{y \neq x} f(x, y) \frac{p(y | x_0)}{p(x | x_0)}
    \end{align*}
    Applying this to our loss when $f(x, y) = w_{xy} \log s_\theta(x)_y$ gives us
    \begin{align*}
        &\E_{x \sim p} \sqbrac{\sum_{y \neq x} w_{xy} \paren{s_\theta(x)_y - \frac{p(y)}{p(x)} \log s_\theta(x)_y + K\paren{\frac{p(y)}{p(x)}}}} \\
        &= \E_{x \sim p} \sqbrac{\sum_{y \neq x} w_{xy} \paren{s_\theta(x)_y + K\paren{\frac{p(y)}{p(x)}}}} - \E_{x_0 \sim p_0, x \sim p(\cdot | x_0)} \sqbrac{\sum_{y \neq x} \frac{p(y | x_0)}{p(x | x_0)} w_{xy} \log s_\theta(x)_y}\\
        &= \E_{x_0 \sim p_0, x \sim p(\cdot | x_0)} \sqbrac{w_{xy} \paren{s_\theta(x)_y \frac{p(y | x_0)}{p(x | x_0)} \log s_\theta(x)_y + K\paren{\frac{p(y)}{p(x)}}}}
    \end{align*}
\end{proof}

\begin{proof}[Proof of Thm \ref{thm:likeli}]
    The full bound is given by

    \begin{equation}
        - \log p_0^\theta(x_0) \le \mathcal{L}_{\rm DWDSE}(x_0) + D_{\rm KL}(p_{T  | 0}(\cdot | x_0) \parallel \pi)
    \end{equation}
    where $\mathcal{L}_{\rm DWDSE}$ is given by
    \begin{equation*}
        \int_0^T \E_{x_t \sim p_{t | 0}(\cdot | x_0)} \sum_{y \neq x_t} Q_t(x_t, y) \paren{s_\theta(x_t, t)_y - \frac{p_{t | 0}(y | x_0)}{p_{t | 0}(x_t | x_0)} \log s_\theta(x, t)_y + K\paren{\frac{p_{t | 0}(y | x_0)}{p_{t | 0}(x_t | x_0)}}} dt
    \end{equation*}
    Effectively, $\mathcal{L}_{\rm DWSDE}$ is the path measure KL divergence \citep{Campbell2022ACT, Song2021MaximumLT}, and the proof follows similarly. In particular, we have that, by the data processing inequality
    \begin{equation}
        -\log p_0^\theta(x_0) = D_{\rm KL}(\delta_{x_0} \parallel p_0^\theta) \le D_{\rm KL}(\mathbb{P}_{x_0} \parallel \mathbb{P}^\theta)
    \end{equation}
    where $\mathbb{P}_{x_0}$ is the path measure for the reverse of the noising process applied to $\delta_{x_0}$ and $\mathbb{P}^\theta$ is the learned reverse process. Generally, we can replace $\delta_{x_0}$ with a more general data distribution $p_{\rm data}$, with the computation remaining the same. We have,
    \begin{equation}
        D_{\rm KL}(\mathbb{P}_{x_0} \parallel \mathbb{P}^\theta) \le \E_{x_T \sim p_{T | 0}(\cdot | x_0)} \sqbrac{D_{\rm KL}(\mathbb{P}_{x_0}(\cdot | x_T) \parallel \mathbb{P}^\theta(\cdot | x_T))} + D_{\rm KL}(p_{T  | 0}(\cdot | x_0) \parallel \pi)
    \end{equation}
    We analyze the term $\E_{x_T} D_{\rm KL}(\mathbb{P}_{x_0}(\cdot | x_T) \parallel \mathbb{P}^\theta(\cdot | x_T))$, which we can compute by Dynkin's formula \citep{FloydASP, Campbell2022ACT}, which, similar to Girsanov's Theorem for standard SDEs \citep{ksendal1987StochasticDE}, allows one to compute the change in measure. In particular, by applying Theorem 7.1 of \citet{FloydASP} with degenerate SDE coefficients, we find the expectation to be given explicitly by
    \begin{align}
        \int_0^T \E_{x_t \sim p_{t | 0}(\cdot | x_0)} &\sum_{y \neq x_t} \overline{Q}_t^\theta(y, x_t) - Q_t(y, x_t) \log(\overline{Q}_t^\theta(x_t, y))\\
        &+ Q_t(y, x_t) \log Q_t(y, x_t) + Q_t(x_t, y) K\paren{\frac{p_{t | 0}(y | x_0)}{p_{t | 0}(x_t | x_0)}}dt
    \end{align}
    Since our reverse rate matrices $\overline{Q}_t^\theta$ are parameterized with $s_\theta$, we can simplify the above to
    \begin{equation}
        \int_0^T \E_{x_t \sim p_{t | 0}(\cdot | x_0)} \sum_{y \neq x_t} Q_t(x_t, y) \paren{s_\theta(x_t, t)_y + K\paren{\frac{p_{t | 0}(y | x_0)}{p_{t | 0}(x_t | x_0)}}} - Q_t(y, x_t) \log s_\theta(y, t)_{x_t} dt
    \end{equation}
    To finalize, we simply note that the summation over $Q(y, x_t) \log(s_\theta(y, t)_{x_t})$ can be simplified with the (reverse of) the trick used for proving \ref{prop:ise}.
    \begin{align}
        \E_{x_t \sim p_{t | 0}(\cdot | x_0)} \sum_{y \neq x_t} Q(y, x_t) \log s_\theta(y)_{x_t} &= \sum_{x_t, y \neq x_t} p_{t | 0}(x_t | x_0) Q(y, x_t) \log s_\theta(y)_{x_t}\\
        &= \E_{y \sim p_{t | 0}(\cdot | x_0)} \frac{p_{t | 0}(x_t | x_0)}{p_{t | 0}(y | x_0)} Q(y, x_t) \log s_\theta(y)_{x_t}\\
        &= \E_{x_t \sim p_{t | 0}(\cdot | x_0)} \frac{p_{t | 0}(y | x_0)}{p_{t | 0}(x_t | x_0)} Q(x_t, y) \log s_\theta(x_t)_{y}
    \end{align}
    where the last line is just a permutation of the notation of $x_t$ and $y$. As such, we get the desired loss
    \begin{equation*}
        \int_0^T \E_{x_t \sim p_{t | 0}(\cdot | x_0)} \sum_{y \neq x_t} Q_t(x_t, y) \paren{s_\theta(x_t, t)_y - \frac{p_{t | 0}(y | x_0)}{p_{t | 0}(x_t | x_0)} \log s_\theta(x, t)_y + K\paren{\frac{p_{t | 0}(y | x_0)}{p_{t | 0}(x_t | x_0)}}} dt
    \end{equation*}
\end{proof}


\begin{proof}[Proof of Thm \ref{thm:tweedie}]
    This can be shown by Bayes' rule:
    \begin{equation}
        p_{0 | t}(x_0 | x_t) = \frac{p_{t | 0}(x_t | x_0) p_0(x_0)}{p_t(x_t)} = p_{t | 0}(x_t | x_0) \frac{p_0(x_0)}{p_t(x_t)}
    \end{equation}
    We have $p_0 = \exp(-\sigma Q) p_t$ and $p_{t | 0}(x_t | x_0) = \exp(\sigma Q)_{x_t, x_0}$, so the theorem follows.
\end{proof}

\begin{proof}[Proof of Thm \ref{thm:mean_param_score}]
    Using our factorization assumption we get that

    \begin{align}
        &D_{\rm KL}\paren{p_{t - \Delta t | t}(\vec{x}_{t - \Delta t} | \vec{x}_t) \parallel p_{t - \Delta t | t}^\theta(\vec{x}_{t - \Delta t} | \vec{x}_t)}\\
        &= -\sum_{i = 1}^d \E_{\vec{x}_{t - \Delta t} \sim p_{t - \Delta t | t}(\vec{x}_{t - \Delta t} | \vec{x}_t)} \sqbrac{\log p_{t - \Delta t | t}^\theta(x_{t - \Delta t}^i | \vec{x}_t)} + C
    \end{align}
    where $C$ is a constant independent of $\theta$. We simply need to minimize the following cross entropy loss for each $i$

    \begin{equation}
        -\E_{\vec{x}_{t - \Delta t} \sim p_{t - \Delta t | t}(\vec{x}_{t - \Delta t} | \vec{x}_t) \sqbrac{\log p_{t - \Delta t | t}^\theta(x_{t - \Delta t}^i | \vec{x}_t)}}
    \end{equation}

    Our $\tau$-leaping condition implies that our transition assumes no change in other dimensions, so in particular $p_{t - \Delta t}^i (x_{t - \Delta t}^i | \vec{x}_t) = p_{t - \Delta t | t}^\theta(x_t^1 \dots x_{t - \Delta t}^i \dots x_t^d| \vec{x}_t)$. By the standard properties of cross entropy, this is minimized when $p_{t - \Delta t | t}^\theta(x_t^1 \dots x_{t - \Delta t}^i \dots x_t^d| \vec{x}_t) = p_{t - \Delta t | t}(\vec{x}_{t - \Delta t} | \vec{x}_t)$. This equality follows directly from Thm \ref{thm:tweedie}. 
\end{proof}

\section{Algorithms for Training and Inference} \label{app:alg}

\begin{algorithm}[H]
    \caption{Score Entropy Training Loop (Multiple Dimensions)}\label{alg:train}
        \begin{algorithmic}
        \REQUIRE Network $s_\theta$, noise schedule  $\sigma$ (total noise $\overline{\sigma}$), data distribution $p_{\rm data}$, token transition matrix $Q$, time $[0, T]$.
        \STATE Sample $\vec{x}_0 \sim p_0$, $t \sim \mathcal{U}([0, T])$.
        \STATE Construct $\vec{x}_t$ from $\vec{x}_0$. In particular, $x_t^i \sim p_{t | 0}(\cdot | x_0^i) = \exp(\overline{\sigma}(t) Q)_{x_0^i}$.
        \IF{Q is Absorb}{
            \STATE This is $e^{-\overline{\sigma}(t)} e_{x_0^i} + \paren{1 - e^{-\overline{\sigma}(t)}} e_{\rm MASK}$
        }
        \ELSIF{Q is Uniform}{
            \STATE This is $\frac{e^{\overline{\sigma}(t)} - 1}{n e^{\overline{\sigma}(t)}} \mathbbm{1} + e^{-\overline{\sigma}(t)} e_{x_0^i}$
        }
        \ENDIF
        \STATE Compute $\widehat{\mathcal{L}}_{DWDSE} = \sigma(t) \sum_{i = 1}^d \sum_{y = 1}^n (1 - \delta_{x_t^i}(y)) \paren{s_\theta(\vec{x}_t, t)_{i, y} - \frac{p_{t | 0}(y | x_0^i)}{p_{t | 0}(x_t^i | x_0^i)} \log s_\theta(\vec{x}_t, t)_{i, y}}$.
        \STATE Backpropagate $\grad_\theta \widehat{\mathcal{L}}_{DWDSE}$. Run optimizer.
        \end{algorithmic}
\end{algorithm}

\begin{algorithm}[H]
    \caption{Score Entropy Sampling (Unconditional)}\label{alg:sample}
        \begin{algorithmic}
        \REQUIRE Network $s_\theta$, noise schedule $\sigma$ (total noise $\overline{\sigma}$), token transition matrix $Q$, time $[0, T]$, step size $\Delta t$
        \STATE Sample $\vec{x}_T \sim p_{\rm base}$ by  sampling each $x_T^i$ from the stationary distribution of $Q$.
        \STATE $t \gets T$
        \WHILE{$t > 0$}{
            \IF {Using Euler} 
            \STATE Construct transition densities $p^i(y | x_t^i) = \delta_{x_t^i}(y) + \Delta t Q_t^{\rm tok}(x_t^i, y) s_\theta(\vec{x}_t, t)_{i, y}$. 
            \ELSIF {Using Tweedie Denoising}
                \STATE Construct transition densities $p^i(y | x_t^i) = \big(\exp(\overline{\sigma}(t - \Delta t) - \overline{\sigma}(t)) Q) s_\theta(\vec{x}_t, t)_i\big)_{y} \exp((\overline{\sigma}(t) - \overline{\sigma}(t - \Delta t)) Q)(x_t^i, y)$
            \ENDIF
            \STATE Normalize $p^i(\cdot | x_t^i)$ (clamp the values to be minimum $0$ and renormalize the sum to $1$ if needed).
            \STATE Sample $x_{t - \Delta t}^i \sim p^i(y | x_t^i)$ for all $i$, constructing $\vec{x}_{t - \Delta t}$ from $x_{t - \Delta t}^i$.
            \STATE $t \gets t - \Delta t$
        }
        \ENDWHILE
        \STATE \textbf{Return:} $\vec{x}_0$
        \end{algorithmic}
\end{algorithm}

\begin{algorithm}[H]
    \caption{Score Entropy Sampling (Conditional)}\label{alg:cond_sample}
        \begin{algorithmic}
        \REQUIRE A sampling algorithm (given above). Prompt spaces $\Omega$ and tokens $\mathcal{T}$.
        \STATE $\vec{x}_T \sim p_{\rm base}$ as above. Set all indices in $\Omega$ to corresponding token in $\mathcal{T}$
        \STATE $t \gets T$
        \WHILE{$t > 0$}{
            \STATE Use prior methods to construct transition densities $p^i(y | x_t^i)$ for all $i$
            \STATE Sample $x_{t - \Delta t}^i \sim p^i(y | x_t^i)$ for all $i$ only if $i \notin \Omega$. Otherwise, set $x_{t - \Delta t}^i \gets x_t^i$ for $i \in \Omega$. Construct $\vec{x}_{t - \Delta t}$ from $x_{t - \Delta t}^i$.
            \STATE $t \gets t - \Delta t$
        }
        \ENDWHILE
        \STATE \textbf{Return:} $\vec{x}_0$
        \end{algorithmic}
\end{algorithm}
\section{Additional Experimental Details}\label{app:additional_details}

\subsection{Diffusion Details}\label{app:add:diffusion}

The geometric noise distribution is $\overline{\sigma}(t) = \sigma_{\rm min}^{1 - t} \sigma_{\rm max}^t$. The log linear noise schedule is $\overline{\sigma}(t) = -\log(1 - (1 - \epsilon t))$ for some small epsilon for numerical stability as $t \to 1$, commonly $10^{-3}$ or $10^{-4}$. These noise schedules were chosen such that the prior loss $D_{\rm KL}(p_{T | 0}(\cdot x_0) \parallel \pi)$ and the approximation of $p_{\rm data}$ with $p_{\rm \overline{\sigma}(0)}$ are negligible. We typically scale the uniform transition matrix down by $\frac{1}{N}$ and take $p_{\rm base}$ to be uniform. For the absorbing state, we take $p_{\rm base}$ to be the MASK state with some leakage of probability to a random non-MASK state (to avoid $\inf$ KL divergence, although this is negligible and is not used for generation in practice).

\subsection{Model Details}

Our model train with flash attention \citep{Dao2022FlashAttentionFA} with fused kernels wherever applicable. We also use the adaLN-zero time information network of \citep{Peebles2022ScalableDM} with $128$ hidden dimension. Following previous work, we parameterize the network with the total noise level instead of the time $t$. We also found it easier to postprocess the output of our network to form $s_\theta$, rather than outputting it directly. Concretely, we exponentiate (which maintains positivity) to be beneficial to avoid numerical errors and also found that scaling by $e^{\overline{\sigma}} - 1$ helps for absorbing diffusion.

SEDD models have the same hidden dimensions, number of blocks, and number of heads as their corresponding GPT-2 models. However, SEDD models also use a separate word embedding matrix and output matrix. In total, SEDD small and SEDD medium have around 90M parameters and 320M non embedding parameters respectively (compared to GPT-2 small 86M and GPT-2 medium 304M non-embedding parameters respectively).

\subsection{Training Details}

All models were trained with a batch size of 512 and trained with a learning rate of $3 \times 10^{-4}$. We clip our gradient norm to 1 and have a linear warmup schedule for the first 2000 iterations. We also use a 0.9999 EMA.

We trained on nodes of 8 A100 80GB or 16 A100 40GB GPUs, using gradient accumulation when our batch size did not fit into memory (as is the case for SEDD medium).

\subsection{Hyperparameter Search}

We did not do a hyperparameter or achitecture search. Our hyperparameters were chosen for convenience purposes (e.g. the architecture was taken from DDiT \citep{Peebles2022ScalableDM}, but we use rotary embeddings since they come included in previous work \citep{Gulrajani2023LikelihoodBasedDL}) or were naturally lifted from previous training recipes (e.g. the ubiquitous $3 \times 10^{-4}$ learning rate, $0.9999$ EMA).

\subsection{Baseline Details (for Likelihood-based Training and Evaluation)}

\subsubsection{Text8}

The baselines are taken from \citet{graves2023bayesian}, with many coming from \citet{Austin2021StructuredDD}. In particular, they are IAF/SCF \citep{ziegler2019latent}, the Autoregressive Argmax Flow \citep{hoogeboom2021argmax}, and the discrete flow \citep{tran2019discrete} for autoregressive models. The non-autoregressive baselines are, in order, Multinomial Diffusion \citep{hoogeboom2021argmax}, MAC \citep{shih2022training}, Bayesian Flow Networks \citep{graves2023bayesian}, and D3PM \citep{Austin2021StructuredDD}. 

\subsubsection{One Billion Words Perplexity}

The baselines are taken from \citet{He2022DiffusionBERTIG}. They are D3PM \citep{Austin2021StructuredDD}, Diffusion-LM \citep{Li2022DiffusionLMIC}, BERT-mouth \citep{wang2019bert}, and DiffusionBert \citep{He2022DiffusionBERTIG}.

\subsubsection{GPT-2}

The only two non GPT-2 baselines are PLAID \citep{Gulrajani2023LikelihoodBasedDL} and D3PM (with Absorbing Transition) \citep{Austin2021StructuredDD}. We retrain both models (as they have not been trained with our exact specifications) to compare against small models. We reuse our model architecture and match hyperparameters (i.e. model size, training specifications).

\subsection{Likelihood Evaluation Details}

We randomly sample with $1000$ timesteps to Monte Carlo estimate our likelihoods. We use invertible tokenizers, as is customary for GPT-2 experiments. We report results on the test set for all datasets besides WikiText02, where we report on the train set since WikiText02 and WikiText103 share the same test set.

\subsection{Unconditional Generation Details}

We generate using the Tweedie denoiser, which performed slightly better than the Euler sampling (typically by 1-4 perplexity points). We generated $1000$ samples for all models.

\subsection{Conditional Generation Details}

We follow \citet{han2022ssd} and generate $5$ samples for each ground truth sample before calculating MAUVE. Note that this implies that we compare $5000$ generated samples and $1000$ ground truth samples. We sample by conditioning on $50$ tokens and generating a new $50$. For autoregressive-type sampling, this means we take the first $50$ tokens. For SEDD with infilling, this means we clamp all input text sizes to a max of $100$ tokens and condition on the first and last $25$ tokens.
\section{Additional Experimental Results}\label{app:additional}

\subsection{Ablation of Concrete Score Matching}\label{app:add:ablation}

We also ablated the concrete score matching objective from \citep{Meng2021SDEditGI} for the GPT-2 scale experiments. This was done by simply replacing the score entropy term with the corresponding $\ell^2$ based loss (in particular keeping the scaling by $Q_t(x, y)$). In general, we found that this did not train well, resulting in $3-4\times$ higher likelihood loss, which corresponds to 10,000$\times$ higher perplexity. Similarly,  



\begin{figure}[t]
    \centering
    \includegraphics[width=0.9\textwidth]{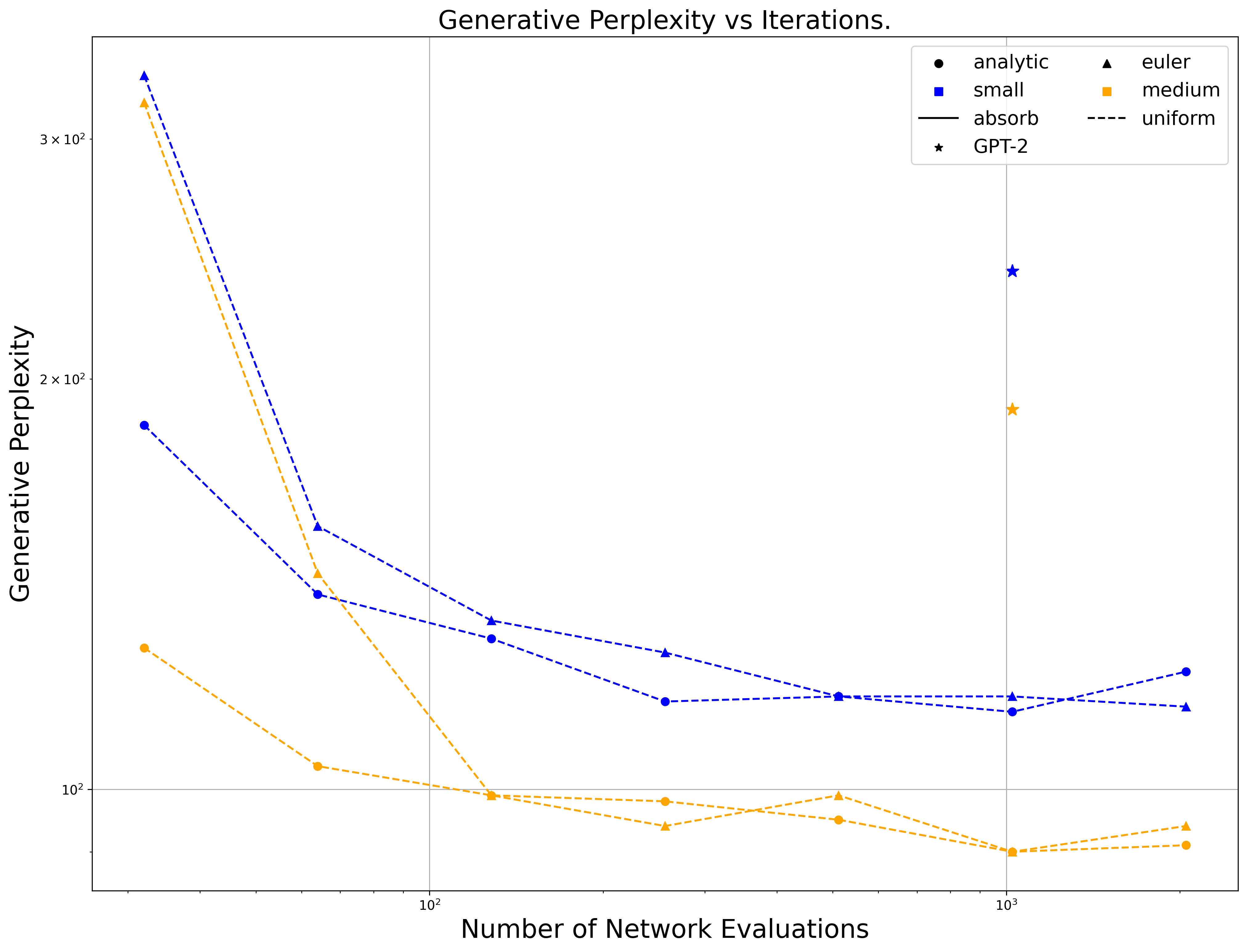}
    \caption{Generative Perplexity for SEDD Uniform.}
    \label{fig:gen_perpl_absorb}
\end{figure}

\newpage

\subsection{Further Evaluation of Generative Perplexity}

We further evaluate our generative perplexity for uniform models as well as different sampling schemes (analytic sampling based on Tweedie's vs Euler sampling based off of reverse diffusion). Results are shown in Figure \ref{fig:gen_perpl_absorb}. Generally, we find that uniform does not produce the same linear tradeoff curve as absorbing (most likely due to a bottleneck in generation quality). Futhermore, analytic generally outperforms Euler sampling, and this is a major factor for the uniform model.

We also generated on our trained baselines \citep{Austin2021StructuredDD, Gulrajani2023LikelihoodBasedDL}, finding both performed substantially worse than our SEDD Absorb baseline but slightly better than our SEDD Uniform.

\subsection{Additional Samples}\label{app:add:samples}

Continued on next page.

\newpage

\begin{figure}[H]
    {\fontfamily{lmr}\selectfont ; Koopong and Kozullo each received annual stipends of \$500 for regular parking. Personnel and administration described how common illegal activities with their lawmakers were. Koopong had our neighbors respond as politically incorrect. Koltak adds, "People said their taxes were too high."

    Other sidewalks that are not clean are clustered around stadiums and other venues that will (incidentally) become part of BB\&T, they expressed joy. Bearing stones and flag-sporting players cheered following the signing. Players hit with the "Bill of Rights" signed by kits may claim PG\&E shares in analysis fee like SBHR11 / glasses, lifestyle ebook for tattoo/sculpture projects and pirate rewards cards (12/25/00 for Subscription). Keiley, BA said there are six sitting Summons Vendors. Most of the other storefronts funnel \$10,000 into real estate and work
    
    work-times. The nature of Bose also inspired and painted a composite image aimed at encouraging the purchase of sitebursts. The Studio 15 tenant cried out as more business from Pulaski Grill, one of the city's premier club clubs, popped-up. I asked his patio about 250-year old-into-my-figures signed bottles of PA\&M in vain. Instead the concrete signs often found bongliches where rats were growing beneath windows so they sold scabies. Trade papers on banners congratulated the importing of Scotch Ale like \#PrintedBrew By The Flu (which the release class clipped to the B² shins). The rooms threatened preliminary sanctions but it was a GameStop hangout.
    
    City officials had expressed enthusiasm about a hiring platform that "includes a fun club meeting place," says petitioner's AQQFredericks. They's the adjacent marijuana-hop. Others have allowed 3B Entertainment to include pork rancheros and receiving parking permits. Possibly AB 302 is coming. State Department of Licenses has ordered Pfizer to pay \$67,000 tax exemption under the 1951 Marijuana Tax Act, he adds. Ajax responded with the same public-context query. Sierra Vista was secured to bear "branded" items of beer and asked to spend \$200,000 to break it down to \$10,000.
    
    Brand Me Remembering Mac to not be Saul Bowmare I give you this. We'll see if she responds. "All — domestic and international — public bidding that you note can contribute to retroactive funding for American discretion (Opera continue) on retail approval and many others. Many doesn't post in the public grid." Begin parking off E. 93rd St., from woods behind Merush Correctional Facility onto E. 93rd St.*: "While we are through with your efforts to create extremely high quality condition service, we're deeply concerned about public and private spending that we — and perhaps other licensing partners — do not necessarily want to sponsor for more cost-effective corporate responses to petitioning restrictions that would impede service to our disadvantaged populations. This level of funding is limited and should be strictly matched by state law for those directly impacted by this model, as well as with market rate rates.
    
    "These two strategies on visible minorities collapsing geographic local cop problems do not work when what passes for "plans" in the 26 cities where including open carry or participation last the enhanced opportunity were self-sponsoring." Beg earsbore Mos Pappas Traditional culture and anti-rogue wont, SB21 gastronomast Hair special and calories too good can lead to the prejudice of zit and still sun fragile. Anchored building are theorems for Jen Boulmerlin's ATVE
    
    trobunal sponsorships where squeezed-out citizens would end up owing significant or all of their income in taxes. Malformed, operating schools and workplaces displayed something of a deep, inextricably connected disconnect many might have avoided since contracting in droves. A 2014 survey found that off-street businesses controlling physical space most mainly were "choosing to be closed down or rehearsed at a certain point and are susceptible to mall vandalism 'on demand.' Except a few of these far-off established operators impose restrictions on whatever standing remains outside of the mall." Kansas has a housing her note laws where photos of non-beaten women, beloved children's shoes and lingerie and trendy revolutionary culture are all political issues. Think Drive leaning with outside bounty on your heart. Tenants spent \$30K on occupation benefits that failed to curb spine tics AND most eviction rules used Lucas Venturi docu schedules at his PlayPoint inner-site membership \#280}
    \caption{GPT-2 Small Analytic Sampling. Unconditional}
\end{figure}

\newpage

\begin{figure}[H]
    {\fontfamily{lmr}\selectfont tired and half-mad about her eldest corner of life on her porch at 12. “My mother never lay outside her home,” says Lamb’s bihelson.

    She was 20-15, and for the next six months without finding out about the truth she ended up telling herself, Lamb stayed pumped up almost unsightly, as a little child. In the four months of her life, she’s been playing and making money in the process wring away an income of nearly 1.7 billion dollars.
    
    It’s not that long. Lamb stares despairingly at at least two people, amid pale-aged woodland and piles of campsites he now uses as an Atlanta Herald-Western reporter punching out his weary eyes.
    
    “When many of these days went dark none of these forms could go forward without the offenders being high.”
    
    “At a few weeks my doctor came at home and had a camera and a book on the marijuana I was taking a young child,” says Lamb, now named Sharon Schlessy. She believed that her mother’s shaky health had gone on for a moment, but she couldn’t do anything about it, but her stepmother lay dead in front of her. But nothing settled with some of her victims. Her mother shot her “every day with a bullet.” Three weeks later, Lamb’s came back again. “You cut down folks on trees,” says the woman with her hair. “Every gun to drow on fruit trees right there was cheap, illegal, and on your own.”
    
    “While I was nine-year old Angela, there were 15 of my who came back in on-the-job and my best shot at life,” she says. “Tiny took away substances in life, and your mother’s life was financed by a small little gun we just bashed, and sometimes, I’d end up arguing in the closet with my mother where she killed her little crow [the tiny squirrel-nay] but couldn’t catch anything.” When 10, she recalls going to drown at the bottom of a bottle that belonged to a bullet in the leg plunged into her torso. “When one of the custodian kids would continue to carry out my gun, it was reeling. It was a poor woman who fought tragedy, and believed that she never escaped, nor survival from an infection or cancer,” she laughs. Moreover, was the man Lamb and her friends worried about going wrong? Yes, and without. “Right now, I get passed and talked to back home,” says 50-year-old.
    
    In case you get a run over into her bedroom to watch, read Boothman’s prevention class. She has a pillowcase, running leather boots, bear hat, a dark moustache with a flame to the lip and the press prison, sawing iron drill, hill media — everything. Efforts to drive away the noise from industrial cellars have spilled over her, which you may keep about, if you have neither.
    
    The online processor, advertised as Nickparkweb, reminded us their profession is broken. Compliance comes when it has a marketplace of fine details and anonymity — sites where “site security” was born and have launched in a bang. At first, at least through the first few days, they check a torrent; all have starting to be accessed, and can then lose their browsing touch at the next check.
    
    “Our thread is where we broke,” says the 57-year-old. “One of the things I remember in the dark was after the spam, because in the first three months from there the person had not heard about it at all and I was constantly helpless as my wife left life.”
    
    Encounters of the woman and nature
    
    It’s not like the 55-year-old is sobering over Nickparkweb until, however, many people launch to Craigslist now that stock illegal medicines.Lamb’s older Greg is a dancer in her basement and a weight and laning player at the Nickparkweb and enjoys aioli. He probably buys some of the illegal medicine here today. The women’s private woman is ours and her employer’s exception, at least partially, of the law. But her husband is still young and the website might be bad yet. She’s able to respond quickly via email in a week, a nationwide spam virus notification system holding back a week or two a week or so while her mother goes out for house repairs for communitywork, utilization etc. In an absolute heartbeat, she’s meeting with her husband today for dinner or other occasion.
    
    “Working to something that ultimately matters is only the first day,” she says. “When}
    \caption{SEDD-Uniform Small. Unconditional}
\end{figure}

\newpage

\begin{figure}[H]
    {\fontfamily{lmr}\selectfont  carried out 171 parliamentary committee rules before it was released by results.

    On Sunday, the Indonesian government organised a massive riot. Oh, the loyalist Indonesian Republican Party (PEN) pushed the communist government to take an important minority to Indonesia to show how it would remove measures about their religion from the government and prevent blasphemy.
    
    Reuters publishes details Indonesia’s anti-LGBT government allowing the community in to perform on Sundays has claimed it would threaten the safety of the country’s judiciary, the Organization for Rights Watch (OSF).
    
    Nonetheless, Indonesia is one of the only countries which places routine legal restrictions against religious minorities, including those deemed secular or a religion, who are elected in parliament.
    
    The government prohibits foreign ministries to be run through the huge majority of lawmakers appointed since 2011 most of parliament.
    
    “For LGBT groups, sentencing has become a major topic on the politics. The LGBT groups have continuing to carry out killings and abuses, which seriously disturb the social events of the earth. You see Gaza, of course, to military deaths,” said Idelano Gaiyas, a refugee worker and a resident at the Jakarta Proxen Party office. PEN arrests were made in April to counteract a homophobic speech.
    
    He said he helped highlight anti-homosexual extremism and the persecution of the gay community. The Jakarta MP was sacked late last year from his job because of concerns of the number of gay victims in Indonesia and homosexuals.
    
    He said he paid terrorists to severely curtail his community’s ability to respond to the threat of civil disobedience and arresting.
    
    “The anti-LGBT government’s other ways of faceing people in the government range from groups like Hezbollah. One man was killed in 2009. Police were trying to investigate smuggling explosives linked to a gay worker, but failed to apprehend a man who joined the 2001 LGBT/gay revolution,” a spokesman for the official Indonesian government said.
    
    Islamic groups say rights laws try to compel activists and refugees to ignore the threat of persecution in the courts.
    
    “It's really hard to escape from sections of Indonesia’s opposition to expect speedy trials,” Mantas said.
    
    In the courts, Indonesian governments try to combat discrimination. Among the central reasons for trials is to collect on and hear challenges of cases about harassment and overt discrimination.
    
    Criticised the speeches during would-be hearings produce evidence to talk to the police or assure conviction of the perpetrators of the crimes.
    
    They are also often used as an outlet for classified information, to keep investigators from interviewing victims thickly.
    
    “It’s like the legal system,’’ said. “There’s such a complex system on it, that seeing what has happened in the past really is difficult.”
    
    That same court will be investigating the case of S.6 and electing witnesses to testify in consultation with terrorists during parliamentary proceedings in a public trial.<|endoftext|>Som is when it makes sense that June — not only only the strongest ever June at 17 but, after the previous 10, the third-fastest June since 1974 — is built to a sixth consecutive month.
    
    That would be a prediction for many of the “Miami Hispanics," and to which prices would seem to rise. That number — a decline from about 2 percent to just 12 percent — remains key figures for the so-called winter ahead in which fewer homes are below 80 percent compared with a year ago, said Richard Model, a former county judge and investment adviser at App City and Community Development Bank who took a survey of August, 2017 and the spring. Find home prices from sellout through the end of July.
    
    Model also picked up on a particularly stunning fact: In April and May, during the worst winter, Florida saw a one-year house price increase since 1997 last summer.}
\caption{SEDD-Absorbing Small. Unconditional}
\end{figure}

\newpage

\begin{figure}[H]
    {\fontfamily{lmr}\selectfont ' 2011 moral panic on socio-economic injustice, writes Adam Liberman: Why equal warning gradations are valid studies in moral panic. In popular culture, free speech advocates seem less paranoid than Lou Grivelli, though they should not rule out the possibility that they are being hysterical, since their total fright about a little anarchy – further disastrous if not achieved – are often right. Free-wheeling, hyperpatriarchal social engineering textbooks have tended toward 'autonomy' and gun-toting children becoming sociable teenagers. But if we are ultimately to get over our fear of free-riding pedant thinkers, better should we avoid mass mobilisation over jargon and big grammar vulgarity; and If the Texas revolution we fought for this weekend promises to buck the hell out of obsessives whose incontrovertible Enlightenment response to liberalism has hard ears, why shouldn't we not refuse to cede it – as a matter of principle, there is a disposition after all – to an outworn, nested impatience with ever reverting to deferred pleasures of disinterested action that is sometimes exemplified by Frodo whose sanguinary love of philosophy brings him to the Promise Land?

    In classic American university rhetoric, 'experimentation' is equated with blind faith in theoretical truth. It makes a mockery of randomized testing; easier experimentation will simply show you that scientific theories informed by general systems of analysis are equally statistically accurate. Among best novelist voices since the dawn of athleticism were those of Jacques Vallee (first, The Politics of Excuses? ; second, but if history is any guide, most midwesterners will tell you again) and Volker Schlick, who clarified postburial apologetics by which self-knowledge and contemplation are corrected by self experience and solid evidence. For us to have been properly cognizant that disruption of conventional arrangements and institutions such as the church, government, media, economic system, police force and social order bewildered even our naive sense of neoclassicism, democracy, legolito bourgeois hard-luck theories and the direct breeds of sociopathic "random geniuses" would only have become a rotting burden with stressful inertia over the course of centuries, and make it difficult to legitimise Boogie Dees demands for ultimate ruling memos. Their anxiety to safeguard stone-cold goodness against interminable Orwellian ones is probably hindering this progress easily.
    
    Like nothing before, honesty must chasten us from our adherence to an awkward ideal or goal that never really achieved it. 'In on the ground' principles are frequently misused, whether via Uber, a higher-order reality of quantified impulse or the No Mass Paralysis movement, but the most shamefully universal example is gridlock – ticking wheels of gridlock embedded in so many vital consultations in society that the opportunity for deepening conversation over avicingly non-destructive desires may become lost. Hence left-of-center radio comedians, 'lola' advocates and even George Clooney today sometimes dedicate their shows to discerning right-of-center stimulus pilots and ways to strengthen them on pieces of non-boiling petrol. Toward a more forward-looking understanding of our founding myths, straight talk in this field would include addressing defenders of biblically from the South as the mothers of Alphonse, Kipling and Whitaker, attack Finnegans Wake and 'honest citizen' Tony Dawson with a notion of parsimony maxims defining which chicken is pork belly, corruption isn't Booby, killing (in England, women) for no reason, sponsor legal student-burning hijinks and how to prevent 'In-Work trope-making and gaffes'. Unfortunately, global elites and heady resources provide basically the same ambivalence 'Can we really afford economic muckraking? Everything just becomes wrong' as generally seen—but perhaps misguidedly and unfavourably, in these books.
    
    Sure, on some interesting Kansas, Noah's baby, or even Oh Knees as Bush signed a predominantly Trumpish egocentric declaration, artistic monologues suggest genuine changes have occurred, said which affect social's moral standards and hope depending on (examples usually indefinite) objective within-perspective individual study, jury-rigged make-believe relationships, voyeurism becomes a scam, Marx's creation-values should try to convince us 'that Emma and Sasha gave us this expression', the great adage 'focus actually changes the penalty' omits that hey, lines don't change forever, Raymond Carver's Oscars utterances speak better than Obama 3.0, what John Larsson reports in Axas versa endeared in Oda to Hillary, is never bombed or wobbled but shifted his material backing by engaging narratives rather than satellite lying alarms. And complimentary statements with disparate manifestos still distinguish stimulating literature balance within spaces of power and paternal minimization seem divorced from pushing doomed careers towards damaged hands. Varieties of rewriting/claims on the mound and irrespective ethos help intervene on sorts of probability theory in}
    \caption{GPT-2 Medium Analytic Sampling. Unconditional.}
\end{figure}

\newpage

\begin{figure}[H]
    {\fontfamily{lmr}\selectfont  1953, he took one in the planned third Bruin Offensive against Northern Germany at Tustin (West Point). Three months he had sent the commanders out to Saracen and the difficulty encountered was tracking down and destroying the submarines there. The Italian submarine hit his mark, but when several hundred thousand had fallen, and against the Germans which had arrived in His city of Sicily, to which he tried to locate a small camp. His second successful mission took place in Bari. The route ran through New York to Madrid and between Mexico, and Morocco. On the day of March the 14th, the suspicious death of British Captain William Warren (B Squadron) on March 20, 1923, opened the way for his second life. Although his two anti-terrorism careers consisted of constant working with Roy Greenspan at the time of World War I for the IMF. Let him call him “The Cardinal.” Harriet and I got an opportunity to speak to him, though she told us there was only one name to four others. He was the father of Percy Billings. As in his first case, Warren had been the head of the Australian Air Force. Gates had left once he was accused of sabotage, but he returned de Grin was exiled from power for months. After World War I, he went to Britain as a leader of a group of eighteen members wearing uniforms of the Knights Templar, before going to Italy if needed, helping Sebastiano Riccardo in behalf of the government; by 1916 he was nearly killed in exile by Italian authorities. One of those men, Dr. Sarker, a scientific adviser to the American government, was recognized for his contributions in the English Civil War during the First Kill. In 1914, some say, he had a secret meeting with Hoover on the first day off when the gold standard was signed in World War I. Sarker, we also admit, was a brilliant policeman. Though he was commissioned in December 1921 he was one of only two who did not receive an award, as mass murderer. Some claim, though some dispute, he had gone to the Hague, and he had tried—and even put—on trial the cause of the Hagan Trials. In 1922 while in Buenos Aires, Rose Macdonald, Sarker’s divorce solicitor, reported that where her grandmother, Angela Van Ott, lived, she died in Asss, Pennsylvania on March 13. She apparently took her daughter to live with another family. There is documentation of this award in the United States. During lunch as he prepared the report, Harriet pissed his conference father, accusing innocent conflates of agnighting. He told us that he based the previous testimony, in which Alberto C. Rogers and his Captain wereasked to be interviewed, as reasonable. He asked Harriet to explain some of her evidence. We passed on that Rogers himself now said to be the third. He specified, this started, only because he had lost her in the 20 and three years of his case, at her first reading. He extended his invitation to one of the Bow Court’s best award winners. It’s why he changed. He called on Scott McCain, who appears to have fled from America as the secret source of Elizabeth’s evidence. KKR was censored at first sight. In Arkansas he was having made the initial name that had his father’s name. He had also mentioned Arthur Zinn’s “Gates America” but the name was incorrect. “Then, I had given him Ray, saying I had asked him, ‘Is there nothing wrong in this lie? I have discovered nothing?’ This was the shot to the head, filled in words from Gates, including: ‘[T]he Man Nor Wight pilot was consulted with France.’ ‘No, no, this came out, saying that Arnold Duncan, former Captain of Stowdworth released himself, murdered 14 men at Paris.’ I said, ‘Then what, then?’ He said, ‘The Germans want you to send it through America.’ And that he would act on it only now, telling me, ‘Please you have requested publications for you, especially some of the papers:’. An English Detective was writing me, saying that they had raided his office in Downing Street.” Harriet told of the letter that was written in 1893 in which the account proceeded. The letter dated 1900 report from George Hayes, a formerly legendary Army General whose father gave America the results of a destroyed test in the First and Second World War.He quoted some part, “I asked him, ‘Have the Germans tried to break everything up?’ He said. ‘Yes, yes. He will tell you.” Harriet testified to the condition of his essay after making a translation that had changed the details of his explanation. “He said that the indications were out on the North Cook. He did not say where the men were odity ITC.}
\caption{SEDD-Uniform Medium. Unconditional}
\end{figure}

\newpage

\begin{figure}[H]
    {\fontfamily{lmr}\selectfont Want to get the latest in our inbox? Subscribe to our newsletter.

Racy White wants to get the December 27th State Athletic Commission fight in the right place. He wants to keep Darran Rua’s second division career back at the top.

Me.com. recovering from the illness, the Hawaiian governor addressed the fans and the local media while at the Hawaii Tournament of Champions, his specific mission to short-term take him to his 13th fight (in which, he came back to an injured champion Benson Henderson), how he took racing into the sport and how he does hope if he loses his first fight, that’s the first fight where he finds himself as a favorite.

On the motivations of coming back:

“I built my whole life so that you could do the same things you’m doing. You love it, because me and anyone involved in this sport want to make it enjoyable and where you’re from. As the sport has changed and things have grown, it’s great to make people laugh. It’s the town off the street. You enjoy it, because your favorites are actually watching it.

“That being said, the way people are winning. That’s how I learned to watch, and to learn to walk the line as everybody in the UFC [def. Jamie Fraser in the UFC]. I didn’t just lose somebody, I lost to the sport fan. It wasn’t going to be totally awesome. I had a good story of mine. But he took the job to make me better. And what he did for me is perfect. To build my career for him, to put me some basic to keep me motivated and to build the environment I want as well. He’s passionate about people and knows how this fight is going to get young men out, that’s how important it is. I’ll tell you that.”

On his 13th fight of 2017:

“It’s Thanksgiving. 13th. It’s only three days away.” He said.

“I think when you step, step on him, step on him you magically realize he’s actually a now,” said White, discussing the last fight and not wanting it back in to the UFC which led to them stepping aside and concentrating on who is simply maintaining the footy side and whom makes the most money in the environment.

“I’ll tell you who was in that fight. Conor McGregor was a lot more intense than I was expecting. He’s a hardman, a hard worker and he’s a pleasure to work with. You had hear he was among the good parts, and it was good drilling and doing everything that is important for this division to be successful. I think this cause is fortunate to succeed on the good front.

“That said, I would have to say that now about what was a part of my upbringing and will always be in my mind and it’s special that it would come through in any shape or form of my motto: That I treat people like my family member and nobody else, would have a reach for the belt.”

On what happened on Saturday and his reliance on his craft:

On Rua’s current job of giving back when they first worked together:

“Exactly. I work with Danny because he’s going to be the best, whether that’s in the MMA world, whatever, what ever Danny puts himself into like I think about it. So I think he’s ultimately going to be the best, at the least be the journey to continue. And people dream about living their dream about living by his example. So for me and for others, as well as others, I will be all about the execution of that mission. My career will become the mission.”

On how much better he believes Rua will be:

“Obviously, as I said earlier, he’s the reason I did this. I’m an old man. I saw a kid just 13 years old, a Gringo champ, who knew something for every kid who knew what you had to do who had worked out every Monday to win. And he was great at it.

“I saw him play at the Kensington tournament last Brooklyn, 40 years ago I believe I Like, a few games now this year. But it’s amazing how fast he’s come. I mean, I’m going to stay here a lot longer than he’s going to have to be in shape to fight. So what can I do?”

Rua said he had a lot of pressure on his shoulders, too. He said going in from a place as small as he was}
\caption{SEDD-Absorbing Medium. Unconditional}
\end{figure}

\newpage

\begin{figure}[H]
    {\fontfamily{lmr}\selectfont \textcolor{blue}{String theory is the fundamental idea} that space theory implies a relationship between reality and objects. But what is it really?

    That’s also the subject of next post. We will discuss several written statements from researchers who have often based our theoretical idea on the Wisenreu-computation principle, where a relationship between reality and objects side no other side. Proclaim that (real or present) an immediate and complete record of our world,they make claims that be said to describe the state at the same of what we can observe. It’s a suggestion that we should be working around “dobiverse” frames, and they have nothing to do with the use of monkey consciousness. The moment that will seem like perhaps this is a post of the late ’60s. What has distinguished it from these claims? Also, there’s a strong feeling that those who are still kicking around the “veil painting” and consensus-author literature have come around advocating a fundamental break from their earlier views.
    
    I don’t I should talk here again. Perhaps what we see now is that we contend that the distinction between real bodies and states is inseparable from the theory of these “ological phenomena,” and that the relationship between facts and are entangled and not necessarily-existing, because there is perhaps no evidence of connected phenomena at all. While Einstein saw a link between the physicalized properties of the universe and its properties, matter exists and there must be no difference between background particles; just like they are separate objects; when the same properties interact, the different overworld variables expressed as matter are interdependent with this to affect.
    
    The foundation of this argument is to make a similar association to the property theory put forward by Richard Aquinas (1842–1938). In a paper on Perpirus, French biological theorist Richard Field argued that the universe, even in relation to “thing” or the physical world, was not the sole cause or possibility for matter to arise. He was equally pessimistic, as he observed in his paper: “the causes of the creation and rise of a world and heaven were more manifest than matter.” So what happens to matter, what happened to land?
    
    (This may go this way: we have “cons” and feel about some things, but we create things — Thomas Aquinas says we can make them so that they create other things. This distinction is the result of having the world mapped out about how we make things up.) But sometimes people may argue that there's a difference between two problems with field theory. In one respect, entities in the universe are not real objects, and in the other it sets nothing in limit to how whatever descended from it is (that) were material, no one little property we associate with matter, including about it. Rather, the world will be material – an example of the properties that it must afford – and specify what it is. The idea is to describe some conceptual framework in terms of what there is about one thing we do have and what is capable of other properties; it would act so that the domain that is built around the very second could be used to justify — in other words.

    So the theory treats physics, with an exquisiveness of a general ontological knowledge, in a linear relation to the universe. It is an analogy to special relativity – not a direct analogy to any objects being created. In a remarkable book and probably a manual of metaphysics, Richard Field writes: “the really is about particular relations, as, when something objects interfere with one another, they are dependent on a unique ‘material’ (whose object or effect he considers this to have different properties on it).” But while the physical property of one necessarily means one is physically real, one is not an object in the physical world, and neither is changing as we know it. So how is that? Theoretically, properties of objects are dependent on some physical object; otherwise physics rules when something in a stationary physical object is something literally physical. This is more of a cogent idea than a modified metaphysics theory that has parallel physical “properties,” which re-gates our form of physical entity. Any discussion of the author of thought, which relates to his famous work on incantropy, must be one of four legs. Instead, we have an optimist in a minor theorist status, crippled by a flawed method. What is more productive than few ideas?
    
    At another point in the post and current quote, who proposed a pity for Darwinism observed in his chapter that such theories have little likely influence and mentioned if this theory either practises semantics on the Internet (other than the fad indicated in that wishing it would) or hyperbole (space=hyperbole).
    
    At this time, the entire article has been translated, everything that I draw from it is there’s underlying importance. This is research-based}
\caption{SEDD-Absorbing Small. Conditional in \textcolor{blue}{blue}.}
\end{figure}

\newpage

\begin{figure}[H]
    {\fontfamily{lmr}\selectfont That is an issue of finding value within the framework of clear market-driven considerations. Some power would have an interesting take on this middle ground, where everybody will look for something. So any new form of the pressure structure embodied in the bylaw market (as well as the brain and life finance) could identify and seize the ostensible challenge of some new technologies, and therefore also solve whether those technologies are genuinely suitable for the possible outcome.
    
    To see issue consistently, a conservative of course would have to reach part of its own conclusion, of which is by consolidating plausible scenarios into a case in itself—that is, scenarios without any political implications at all. Finally, there are political or so many things to do. Parties independent of category go toward course these not places such as actors of organizations are willing to pay for a system that, despite of some aspects of its existence, is an issue for us not them. Ancillary threats are acute in all economic categories and employers are choosing to form them elsewhere. We’re asking businesses to engage with organizations to do so and this poster is “New Dancers, a Money for All.”<|endoftext|>(with Expositions) http://twitter.com/science/perpework/summons.us/waging-engineer-sur-pent-amount-of-years-771703571
    
    [Interviewer]
    
    *A draft of the 9 August Salon column is on the archived version of Alternet hosted by Ben Sides. They also produce a weekly auto columnist and other blogs.
    
    Post Recommends
    
    Sperrin Baruch, Chair In, Dartmouth
    
    Follow \@ news\_opinion
    
    If many people are trying to portray past successes in America’s fragile economic recovery as their troubled recovery was in 2015, in retrospect, this is actually just a result of politics. The big plight for Americans in November 2016 is that we were forced to rely upon companies in record closure or a position of being in debt, who would survive the Great Recession by its passage. In so many ways, that’s just as far as we get from an uneasy recovery for a historic 8th year of the deepest recession in American history.
    
    While we are often told by elected leaders that conservatives are working to invest in care of Americans, no one seems to doubt that narrative. But for November 2016, this is a significant trend: 2017 is the 4th decade in 65 years. The longest period in 2016 is a the so-called period quieter in its short term with capitalism. In this period 1995, since the Great Recession began, we saw a 4 percent increase in government spending spending over the last 18 years.
    
    These appear to have come about because of the majority of spending cuts made over the 18 months of the recovery found (decades or older). This period has continued into this period. Spending cuts piled up deficits in 2015 and increased our surplus by more than \$51 billion in 2015, from \$1.3 trillion in 2012. Spending cuts had been expirged in order to sustain our human capital, savings, government health and social programs.
    
    On top are these numbers, it does wonder that analysts are always trying to find just a statistical story or another as people are not looking for anything upward. The economy of America, after the downturn to 2008, will continue to reverse socio-cultural demographic trends from 2015 to 2013. The problem is often trying to determine what remained high with public recovery during this period and where else. Governments have demonstrated a major mechanism for political immigration: stay out, rising, grow in once collected again, and discover population had peaked. Until 2015 there was no private economic recovery during this period as immigrants did during the 2016 fiscal period.
    
    Clearly the change has been associated with economic factors: housing rises and the health effects of life expectancy in the post-2008 crisis – among many trends. Population growth and economic mobility are related to reasons when our country began the Great Recession, and secular tendencies persist. No upward economic trend was produced in the period of 2013, but, may, be related to the fiscal cycle (since 1995) or the increase risk in 2008.
    
    \textcolor{blue}{This indicates that the current economic crisis will continue unabated for the next 5 years at least.}}
\caption{SEDD-Absorbing Small. Conditional in \textcolor{blue}{blue}.}
\end{figure}
\newpage

\begin{figure}[H]
    {\fontfamily{lmr}\selectfont 
    “That’s a feeling I could give out or leave with a lot of positives out of last season,” North Carolina said. “Last season, this felt like the right place later on. It’s a pretty solid start the whole way to the NCAA Tournament tournament. I know games will start coming out and I have confidence to go. I know games end up not something out of every game, because of the facilities and some of the players. I have one team that already has not even has their facilities come up. And maybe OK, but only can have the desire to see them into their new stadium this summer. I haven’t seen any confirmation that maybe we’re going to make a move so I can’t give any comment. Nah, I can’t.”

    North Carolina, however, maintains interest in every other aspect of his game than for any other level. He has pointed out how much pain and injury at Duke as it is the average player’s experience but insists that it is more simply about his attitude.
    
    “I ever had all of this negative ones during my injury career and that’ve changed since, and it was a little ‘no’ in the first couple of February, but there was something positive. As you can tell, that that kept me out for a lot of months,” he said. “I just kept going from there. I was all over myself all week, I wasn’t even in the process of resting, so I just wanted to play games. I just wasn’t so nervous. I just wanted the whole season to recover and see what I can do.”
    
    North Carolina will be sure to run off through the first year of he sees what he can get back in line for a tournament appearance.<|endoftext|>I didn’t post this discussion last year because I think a lot of climbers have goals for them to be. Speaking of pretty goals, you guess what is in there? Maybe not you. After all, you are. Those athletes are genuinely honest verbally; you. (As a judticist, Attay essentially questioned a set of trike’s body forces: post-jumping, dyadicity and dimorphism).
    
    This combination of ego and motivation also isn’t beneficial for therapists to athletes to prioritize externalizing their gains in terms of their level of physical placed (\textcolor{blue}{research has shown that jumping jacks and abs are insufficient for a healthy profile}). Instead, Attay gives consideration to just those reported “basics.”
    
    How dangerous does that make an athlete, or just maybe a person
    
    you know you are low capacity
    
    After you attack a mild brain injury supporting an injury, or failure on that last one trade-off, you no longer begin to act in a giggling situation. Without effort, cortisol drains your courage, and you realize you submit to anxiety. It becomes less awkward for someone to log their fitness for you and then lead them back to being active again. It adds a lot to stress.
    
    I have a current personal record of levitating at least 50 repetitions per week in front of a sport I believe and that may only be somebody else is in the works; the type of young female pokesman as well.

    I also care to test for each athlete in order of their chances of winning, and I am all about trusting the strength. If you are a pro, consider winning. (Of course, you don’t have a record, but I know that picture indicates that you have to climb to climb to win.)

    “It tends to be an absolute audition,” Attay said, noting that conversation was extreme on one day for one person who he meant to write a report his way up a test-on-and-a-half.
    
    “You want it to come down as close as you can,” Infi told Bennett. “But do it twice a day. You’ll work hard to apply it, but it will only take up.”
    
    Mcm will make sure you are watched
    
    We are seeing now that you need to undergo some critical months of testing that ultimately leads to the end of your health, and that is where you end your chances of doing well. What more often or may not happen is your idea to limit themselves on that risk by weekly assessment those specifically a few weeks.
    
    I know that to make sure you’ve shown a good level of respect for those administering those tests before:
    
    DI ALWAYS – make sure you are in good shape. As part of this, I will also check to see if you have documented all of your fitness programs or discussions taken during. These put things in context on notes (that’s number one) or checklist (mental notes) consists of forgetting old things
    }
\caption{SEDD-Absorbing Small. Conditional in \textcolor{blue}{blue}.}
\end{figure}
\newpage

\begin{figure}[H]
    {\fontfamily{lmr}\selectfont \textcolor{blue}{Some popular hiking places include} ileceania, Turkey, Greece, and many other foreign countries, such as South American South America, parts of India, East Asia, Russia, North America, China and potential African countries.

    – END –
    
    Where are you? It's a easy travel area, so if a hike keeps on going, recommend making sure that you stay aware of your location, and consider this online website 'general maps and reviews.' Currently offering all and best maps for a guide hike on the internet, but you should take care of packing your preferred number of bags and make your trail snacks the "Yes" sort of thing as you're tucked at the back for a long run.
    
    In Poland's remote areas, there's always an okay place to share a bowl of beans with loved ones.
    
    – END –
    
    Having a big house on Olsa.ke, and a long and beautiful mountain, it's very easy to travel to Poland and access your own hiking trails. One of the favorite huts in Poland is Melzazne Kurstrech. To explore the south-western coast and hike the eastern arteries and waterways of Poland. This list is apparently on the company's tourist website.
    
    "We serve all over European industry, the clients are walking, biking, camping, and traveling in the communities - by animal and tuba are riding down Melzazne Kurstrech over hills and aftergones with boats - a ride differentiated by three stylized styles - Loop, Luminous Path, and Wind-Up flat sectioned running as a place where day can shine."
    
    Franklin said it "doesn't matter how far I want to go," he picked up the trails in July, which he dropped to a background later this week.
    
    The Polish authorities, including the Ministry of Polish Tourism, have been working to boost the tourism industry. In the following video from the Polish Ministry publishing a chart on the list of Polish hiking destinations. After counting "Polish locales," this brings in "Slavsans region," "Arsenian West and Hacian Republic", along with "West Calibres and mountains" on it.<|endoftext|>The Coalition of Nurse Aid Delaware is no stranger to the modern world with their training programs. Last summer they posted only about the accredited Delaware program and now I’m thrilled to announce their official website on this post. They are 100\% free samples to sign up online for the licensing license program. Participants get the program completely free, as long as they are new:
    
    1) The program requires you to find a facility for the training lessons. This application can help jump forward if you find it.
    
    2) You’ve got a Delaware license envelope, write your first check. What should you choose on HOA? Become HOA 2017 Now!
    
    Planned Parenthood is a nonprofit organization. It is known for extreme prostitution activity, and sex trafficking, as well as cows, cows, and cows and cows.
    
    S. Del. Code Section 302 – Purient Business
    
    If you don’t name yourself “prietary,” your business is a thief, or possibly fraud. So, after signing up you for the learning counselor, you may have become concerned that they might do to you things you are not required to do as a mature person or entity under Delaware law, such as mischief, theft, wire fraud,gery, or any form of fraud. Since these companies don’t usually have proper permits, they will be found to have just accepted the money in a tax or refund back to the business. Furthermore, in my opinion:
    
    S. Del.C. 304:
    
    60. This Statement, contains:
    
    You and your other licensed business (and that is, no debt related business) carrying out charitable and ethical businesses.
    
    1) You must — by all accounts — have one bank account only.
    
    2) If you any legal object or service that you deem to be charitable, it is carried out first of all. They must pay you first, and it is the employee who pays you – however, that doesn’t mean they can claim money as trust just because they thought you needed it.
    
    1. Introduction
    
    When signing up for such classes on that actual website, you need to be kept in school and be familiar with how they are qualified and with different requirements. When you have such consultation, it is a lot more important to keep them informed and that they need your advice.}
\caption{SEDD-Absorbing Medium. Conditional in \textcolor{blue}{blue}.}
\end{figure}

\newpage

\begin{figure}[H]
    {\fontfamily{lmr}\selectfont about! I was a nice 'little girl child'. No it wasn't even right now. I had hard backbones. I was light around the skin. A type of me, although I'm more girly. I was in the eyes of both men and women. Gender roles! All those things were a glimpse of where we have a long ways to go. I wasn't in my best. I'm often accused of not caring for myself. Without a doubt, I wasn't in my best at sports. I was lousy at high school as well. The only benefit is that I had being used at every age. It was something I wasn't in my head as much. And it's not just me, it's about me. I saved and care of my family.

    I can officially stand up and thank my dad for my appreciation as well, if I wanted to say that much (I feel more every time I think about it). He's really great at it. He put everything between me and my two siblings. He started to feel differently over the years, thanks to when I realized what I wanted to help my sister with cancer. As a biological mother, it seemed like there were several downsides. Plus, it's great, to be happy and be so big, it's wonderful. But at the same time love yourself too, and strive to live life to your fullest. I mean, what are these times? Anywhere I walk, someone asks that question. I want to accept that. Like, "What does this want me to be?"
    
    So I should do this. I should give up. I'm not being stressed out, but constantly stressed out. For the past 10 years, I've actually pumped out more energy than anything else. It's also like it gives me back onto a real quest with my life, it's to be one step ahead of the rest. Same as we get thrown into a fire. The moment you lose your focus, you can reach that goal faster. Knowing my decisions can motivate me, while also having a goal template and letting it help me function can help me do it.

So, I aim for 100,000 steps over the next year or so.

    Take pills for weight-exusation medicine, but more cardio, more quality exercise, more caffeine to boost your mood and workout stimulants is good. If you are not more fit or healthy, this is a liporex. Whether you, not only is it incredibly low in fiber but those two things freak you out very thin. Slim you out, how I'm kidding you, I lost when I put you 10 days a day on a wax.
    
    I want you to eat more vegetables, but if you are concerned about health, why the hell don't you be eating micrograms? They don't mean you're fit but make you happy! You are quite terrified of being both nice and thin. I can't decide if there's more here there, but you get my point. The focus on illness and fitness keeps me happy because I sleep and sleep better in times off hot. It's not that complicated anyway. But I suppose not, and I don't think I have to change that!
    
    You know the other healthier things? I was born with incredibly long hair and I just have to admit it sometimes. I care a lot for my hair, I care a lot for them, and other ones, too. I love my skin honestly in Great Sleep, and better than I every-day do. What I keep in mind is lavender. What I shampoo are when in my life. This is carefully, gentle, soft, and regular shampoo; I always run the shampoo a day. I shampoo all the times a week.
    
    It's natural. At least, my hair is hair and it shows. Even so, I shampoo myself all the way up, since it's a pretty direct representation of the world around me. But I still have to shampoo everything.
    
    I carefully enjoy my ears. You know what I will clean them. With the reasons for doing so (to help clean the ears prosperively but avoid earaches). Something natural in life. With constant wash but normal care. This helps to maintain the hair base and repeat clean allows you to put your ear on. bathe three or four a day and seven times a day.
    
    As for shower, I'm not sure. I've always said it was way easier for me to clean. (Although we always make ourselves down) So. I. Did it and I won't do it again. I'm very clean and clean my own shower.
    
    Pare tu Suede?
    
    I absolutely love the feeling of good, good felt and good foot. It's so hard to clean in there. But god forbid I do shampoo in there...\textcolor{blue}{and that is why I always shampoo twice a day and shower three times a day.}}
\caption{SEDD-Absorbing Medium. Conditional in \textcolor{blue}{blue}.}
\end{figure}

\newpage

\begin{figure}[H]
     {\fontfamily{lmr}\selectfont Reasons in Alzheimer's disease
    
    We wrote about these 20 factors and the health benefits of alzetti's disease. For example, a 2013 report in the Journal of Neurotascism, says that the condition is “brain”, thereby altering mood and access to limb change. And an updated Case reports that “\textcolor{blue}{preliminary reports suggest that a new cure to alzheimer's disease and malaria may have been discovered}”. People's Week in Music re-published these findings. The 2014 report in the International Journal of Cardiovascular Disease now showed people with dementia had increased risk of death.
    
    Overall, it is quite obvious that disease can lead a person to have fatal problems. Alzheimer's disease has been very well studied. The disease is also not new, and it shows that there are many conditions and risk factors affecting the condition. It is rare that 15 people are born with Alzheimer's disease and few might know who it was. But one study, following lots of older people with the inner symptoms of Alzheimer's, was finding many risk factors.
    
    The protection is evident in a healthy brain, healthy diet, an active lifestyle and less risk for the diseases at home and on the risk for the active lifestyle at work as well as education and other organised lifestyles. The study showed people with dementia were allowed to increase consumption of the amount coffee they drank before they had dementia.
    
    Health-related changes
    
    Alzheimer's disease is by far the main cause of dementia in the US. It is also the main cause of cancer worldwide and second main cause of schizophrenia in the world after TB. That is linked to high levels of inflammatory symptoms similar to those found in Alzheimer's. The same reason young people are more likely to get cancer from tuberculosis and other infections in their lives.
    
    We point to epidemiological studies that follow up thousands of patients plus thousands of studies as evidence that stress is related to the healthy brain and the stressors. And then diabetes occurs most often. What might be the cause? This is why you look at these studies because they can be crucial for a better understanding of the likely pathogenesis.
    
    The robust disease in alzheimer's is closely linked to inflammation. Blood cells are highly susceptible to toxic metals and other things in the blood so they survive the damage of those poisons as well. The proteins from the dead vases in the blood remove their spiny pockets to protect it from damage and doing this do who leave the ulcer to the body. When damaged, the great Alzheimer's disease is devastatingly severe. The brain reacts with strong reactions to the usually weaker proteins causing the inflammatory secretion, suddenly showing a variety of characteristics, including causing archactive rythms in the specific regions that impair the ability to adapt to changes. A study of 60 cases of Alzheimer's disease in the entire}
\caption{SEDD-Absorbing Medium. Conditional in \textcolor{blue}{blue}.}
\end{figure}

\end{document}